\def\eqref#1{equation~\ref{#1}}
\def\1{\bm{1}}
\DeclareMathAlphabet{\mathsfit}{\encodingdefault}{\sfdefault}{m}{sl}
\SetMathAlphabet{\mathsfit}{bold}{\encodingdefault}{\sfdefault}{bx}{n}
\newcommand{\Conv}{\mathop{\scalebox{1.7}{\raisebox{-0.2ex}{$\ast$}}}}%
\theoremstyle{plain}
\newtheorem{theorem}{Theorem}[section]
\newtheorem{proposition}[theorem]{Proposition}
\newtheorem{corollary}[theorem]{Corollary}
\theoremstyle{definition}
\newtheorem{definition}[theorem]{Definition}
\theoremstyle{remark}
\title{LongSSM: On the Length Extension of State-space Models in Language Modelling}
\author[1]{Shida Wang}
\affil[1]{Department of Mathematics, National University of Singapore}
\begin{document}

\maketitle

\begin{abstract}
In this paper, we investigate the length-extension of state-space models (SSMs) in language modeling. 
Length extension involves training models on short sequences and testing them on longer ones.
We show that state-space models trained with zero hidden states initialization have difficulty doing length extension. 
We explain this difficulty by pointing out the length extension is equivalent to polynomial extrapolation. 
Based on the theory, we propose a simple yet effective method - changing the hidden states initialization scheme - to improve the length extension. 
Moreover, our method shows that using long training sequence length is beneficial but not necessary to length extension. 
Changing the hidden state initialization enables the efficient training of long-memory model with a smaller training context length. 
\end{abstract}

\section{Introduction}
    
Large language models~\citep{brown2020.LanguageModelsArea} are usually trained on large corpus with a fixed context length (e.g., 2048 tokens).
However, attention-based transformer~\citep{brown2020.LanguageModelsArea} has an $O(T^2)$ asymptotic growth with respect to the sequence length. 
The cost for training and inference is even higher when we are working with long sequences. 
Recently, state-space models~\citep{gu2020.HiPPORecurrentMemorya,gu2021.EfficientlyModelingLong,gu2023.MambaLinearTimeSequence,de2024.GriffinMixingGated} and linear-attention-based transformers~\citep{katharopoulos2020.TransformersAreRNNs,sun2023retentive,yang2023.GatedLinearAttention} have shown the potential to replace the attention-based transformers~\citep{brown2020.LanguageModelsArea}.
SSMs are recurrent models characterized by parallelism in sequence length and inference cost that remains independent of length.

Despite state-space models having a recurrent form and thereby inducing an \textbf{``infinite-in-time'' memory} of the input history, they tend to exhibit limited length extension beyond the training sequence length in mamba~\citep{gu2023.MambaLinearTimeSequence}.
In practical applications, where the target inference context often exceeds the length of the training sequence and can even be infinite, a pertinent question arises: 
\emph{
Is it possible to train a model with the ability to extend its memory beyond the constraints of a finite training sequence length?
}
The assumption of a finite training sequence length is both reasonable and necessary, given the constraints of GPU memory and the comparatively short training length, especially when compared with the infinite inference length found in real-world applications.

In the earliest transformer model~\citep{vaswani2017.AttentionAllYou}, achieving length extension is challenging, usually constrained by the limitations of absolute position encoding~\citep{vaswani2017.AttentionAllYou, press2022.TrainShortTest}. 
\citet{press2022.TrainShortTest} have demonstrated that introducing attention with linear bias serves as an effective solution to address this limitation and enable length extension.
Apart from the additive bias, another stream of works is constructing relative position embedding~\citep{su2022.RoFormerEnhancedTransformer,sun2022.LengthExtrapolatableTransformer,chen2023.CLEXContinuousLength}. 
In this paper, we adopt the backpropagation through time method that is orthogonal to these previous approaches, and can be used to improve state-space models' length extension capability. 
Moreover, our method shows that the length extension capability can be achieved without using a long training sequence~(\cref{fig:comparison_of_two_hidden_states_initailizations}). 

We summarize our main contributions as follow:
\begin{enumerate}
    \item We show why the zero hidden states initialization scheme has difficulty doing length extension.
    \item Based on the difficulty for zero-initialization case, we introduce the training approach that leverages previous hidden states with no batch-level shuffling.
    \item We show the length extension can be achieved without a long training sequence length. In particular, we show the feasibility to train a model with \textbf{training sequence length 16} and \textbf{truncated BPTT}, but has \textbf{length extension up to 32768}. 
\end{enumerate}

\begin{table*}[tbh!]
    \caption{Comparison of asymptotic training/inference step cost for attention-based transformers~\citep{brown2020.LanguageModelsArea} and state-space models with respect to context length $T$.
    }
    \label{table:comparison_of_train_inf_cost}
    \centering
    \begin{tabular}{c|cc}
    \toprule
                      & Attention-based transformer & State-space models/Linear-attention \\
    \midrule
    Training cost& $O(T^2)$      & $O(T)$ \\
    Inference cost    & $O(T^2)$      & $O(1)$ \\
    \bottomrule
    \end{tabular}
\end{table*}    

\paragraph{Notation}
We use the bold face to represent the sequence while then normal letters are scalars, vectors or functions. 
We use $\|\cdot\|$ to denote norms over sequences of vectors, or functions, while $|\cdot|$ (with subscripts) represents the norm of number, vector or weights tuple.
Here $|x|_\infty := \max_{i} |x_i|, |x|_2 := \sqrt{\sum_{i} x_i^2}, |x|_1 := \sum_{i} |x_i|$ are the usual max ($L_{\infty}$) norm, $L_2$ norm and $L_1$ norm. 
Let $m$ be the hidden dimension and $d$ be the input dimension.

\section{Background}

In this section, we first introduce the state-space models (SSMs).
Compared with traditional nonlinear RNNs, they have better parallelism across sequence length in the sense that fast Fourier transform and associative scan can be used to reduce the training latency. 
Next, we give the definition of three types of length extension capability. 
The aim of this paper is not to improve the length extension towards a particular length but to achieve the monotonic perplexity decrease for weak length extension. 

\subsection{State-space models}

State-space models~\citep{gu2021.EfficientlyModelingLong} have layer-wise nonlinear activations while the traditional nonlinear RNNs have recurrent nonlinear activations (see the comparison of SSMs and RNNs in \cref{sec:comparison_ssm_rnn}). 
\begin{align}
    h_{k+1}   & = Wh_k + (Ux_k + b), \quad h_0 = 0 \in \mathbb{R}^m\\
    \hat{y}_k & = C \bm{\sigma}(h_k), \quad 1 \leq k \leq T.
\end{align}
Here $h_k \in \mathbb{R}^m, W \in \mathbb{R}^{m \times m}, U \in \mathbb{R}^{m \times d}, b \in \mathbb{R}^{m}, C \in \mathbb{R}^{d \times m}.$
The corresponding continuous-time form is 
\begin{align}
    \frac{dh_t}{dt} = Wh_t + (Ux_t + b), \quad \hat{y}_t = C \bm{\sigma}(h_t). 
\end{align}
The solution of $h_t$ in convolution form is $h_t = h_0 + \int_{0}^{t} e^{W(t-s)} (Ux_s + b) ds$.

\paragraph{The convolution form of SSMs} 
Based on the above continuous-time formulation, hidden states sequences can be written into the following convolution form 
\begin{equation}
\label{eq:hidden_state_convolution}
    \mathbf{h} = \rho(t) \Conv (U\mathbf{x}+b) 
\end{equation}
The convolution kernel is $\rho(t) = e^{Wt}$.
Given the convolution form in \cref{eq:hidden_state_convolution}, FFT~\citep{gu2021.EfficientlyModelingLong} can be used to accelerate the computation. 
Compared with $O(T^2)$ cost of attention matrix, it only takes $O(T \log T)$ to evaluate the hidden states $\mathbf{h}$ and corresponding outputs $\hat{\mathbf{y}}$. 

\paragraph{Scan-based acceleration for models with input-dependent gating}
Recent advancements in state-space models have significantly enhanced their expressiveness and approximation capabilities through the incorporation of input-dependent gating mechanisms.
The input-dependent gating refers to the generalization of $W$ and $Ux_k + b$ to $W(x_k) \in \mathbb{R}^{m \times d}$ and $U(x_k) \in \mathbb{R}^{m \times d}$. 
\begin{align}
    h_{k+1}   & = W(x_k) \odot h_k + U(x_k), \quad h_0 = 0 \in \mathbb{R}^{m \times d} \\
    \hat{y}_k & = C \bm{\sigma}(h_k), \quad 1 \leq k \leq T.
\end{align}
Here $\odot$ is the element-wise product. 

As input-dependent gating disrupts the convolution structure and negates the speed benefits derived from FFT, it is still feasible to employ scan-based acceleration techniques~\citep{martin2018.ParallelizingLinearRecurrent}, achieving the $O(T)$ training cost. 
In \cref{subsec:associativity}, we show the associativity of the following binary operator $\circ$ defined over tuple $(W, h)$:
\begin{align}
    (W_1, h_1) \circ (W_2, h_2) = (W_2 \odot W_1, h_1 + W_1 \odot h_2).
\end{align}
The initialization is $h_0 = 0$ and hidden states $h_k$ can be achieved from
\begin{equation}
    (\textrm{\_}, h_k) = (W(x_k), U(x_k)) \circ \cdots \circ (W(x_1), U(x_1)) \circ (I, 0).
\end{equation}
If we embed $U(x_k)$ in $\mathbb{R}^{m \times m}$ rather than $\mathbb{R}^{m \times d}$, this corresponds to the gated linear attention~\citep{yang2023.GatedLinearAttention} whose hidden states are 2D square matrices. 
We summarize the differences in \cref{table:diffs_s5_mamba_gla} of \cref{subsec:associativity}.

\subsection{Length extension}

Length extension has been widely studied for transformers~\citep{press2022.TrainShortTest, su2022.RoFormerEnhancedTransformer, sun2022.LengthExtrapolatableTransformer, chen2023.CLEXContinuousLength}. 
This is an essential attribute for models designed for infinite contexts windows (writing novels~\citep{yuan2022.WordcraftStoryWriting}, autonomous driving~\citep{chen2023.EndtoendAutonomousDriving}, online learning~\citep{marschall2020.UnifiedFrameworkOnline}). %
However, it is shown that the state-of-the-art Mamba~\citep{gu2023.MambaLinearTimeSequence} failed to achieve length extension beyond 4k\footnote{\url{https://openreview.net/forum?id=AL1fq05o7H}, \url{https://imgbb.com/XVT0hGJ}}. 

Building upon existing research in length extension, we initially establish specific concepts to qualitatively classify models based on their capability to extend length. 

\begin{figure*}[tbh!]{
    \centering
    \includegraphics[width=0.8\textwidth]{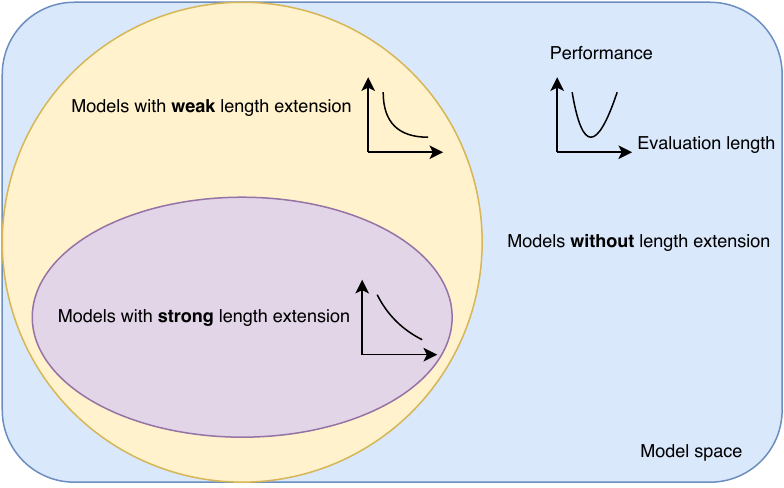}
    \caption{Three types of length-extension capabilities.
    }
    \label{fig:three_context_extension}
}
\end{figure*}

\begin{definition}
For auto-regressive language modeling, the entropy $H(p) = -\sum_{x} p(x) \log p(x)$ of the target language $p$ is fixed.
Here we define three types of length-extension capability based on the monotonicity of the perplexity: 
\begin{enumerate}
    \item{\textbf{(Strong length extension)}}: For some $T_0 > 0$, $\forall T > T_0, \textrm{perplexity}_{T+1} < \textrm{perplexity}_{T}$.
    \item{\textbf{(Weak length extension)}}: For some $T_0 > 0$, $\forall T > T_0, \textrm{perplexity}_{T+1} \leq \textrm{perplexity}_{T}$.
    \item{\textbf{(No length extension)}}: If there does not exists $T_0$ such that weak length extension holds. 
\end{enumerate}
\end{definition}

As demonstrated in \cref{fig:three_context_extension}, models with strong length extension are a subset of those with weak length extension. 

In \cref{fig:length_extension_difficulty_in_mamba}, we evaluate the length extension difficulty for Mamba across different model sizes. 
Mamba is trained with sequence length $T=2048$ and has difficulty maintaining the small perplexity beyond length $T \geq 4096$. 

\begin{figure}[tbh!]{
    \centering
    \includegraphics[width=0.6\textwidth]{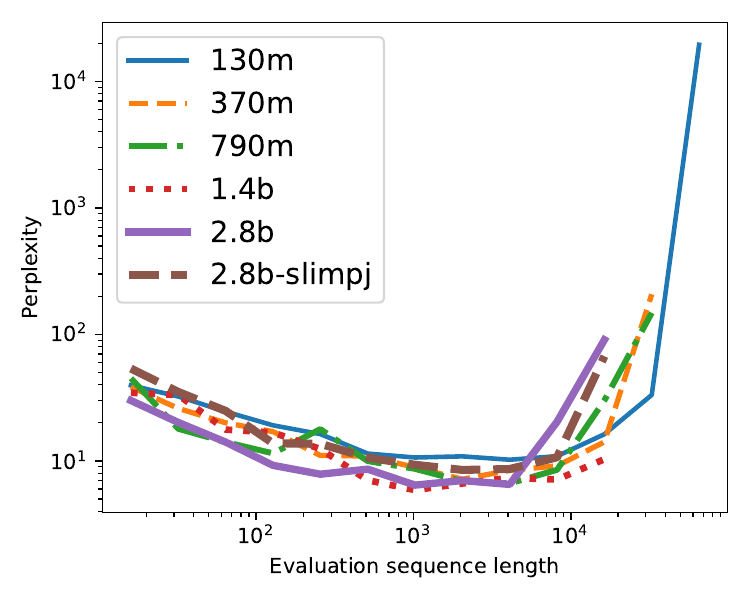
    }
    \caption{
        Length extension performance of Mamba evaluated over the Pile dataset~\citep{gao2020.Pile800GBDataset}. 
        The models are trained with a sequence length of 2048. 
        Although perplexity remains finite for sequences up to 4096, it increases significantly for lengths beyond 8192.
    }
    \label{fig:length_extension_difficulty_in_mamba}
}
\end{figure}
    
Based on the above definitions, a natural question arises: 
\emph{Can length extension exist for autoregressive language modeling?} 
We demonstrate that if a language is viewed as a shift-invariant sequence of random variables, then the weak length extension exists. 

\begin{theorem}[Existence of weak length extension in autoregressive language modeling]
    Assume the entropies of language across different sequence lengths are all finite. 
    Consider the autoregressive language modeling as the learning of sequence of random variables $\{X_k\}_{k=1}^\infty$. 
    The ideal autoregressive language models return the next random variable $\mathbf{X}_{T+1}$ based on the previous random variables $X_{[0, \dots, T]}$. 
    \begin{equation}
        \mathbf{Model}((X_1, \dots, X_T)) = X_{T+1}. 
    \end{equation}

    Consider the entropy of this autoregressive language model
    \begin{align}
        H((X_1, \dots, X_T)) 
        & = -\sum_{x_i \in X_i} p((x_1, \dots, x_T)) \log p((x_1, \dots, x_T)) \\
        & = \sum_{i=1}^T H(X_i | X_1, \dots, X_{i-1}). 
    \end{align}
    By monotonicity of entropy and shift-invariant property, we know $H(X_i | X_1, \dots, X_{i-1}) \leq H(X_i | X_2, \dots, X_{i-1}) = H(X_{i-1} | X_1, \dots, X_{i-2})$.
    By the boundedness of $H$ we know $\lim_{i \to \infty} H(X_i | X_1, \dots, X_{i-1})= 0$.
\end{theorem}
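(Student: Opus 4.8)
The plan is to recognize this as the classical fact that the conditional entropy of a stationary process is non-increasing, and to phrase it in the perplexity language of the statement. The object controlling the next-token perplexity once the context has length $i-1$ is the conditional entropy $a_i := H(X_i \mid X_1, \dots, X_{i-1})$, and \emph{weak length extension} is exactly the claim that the sequence $(a_i)_i$ is non-increasing. First I would reduce the joint entropy to these terms through the chain rule,
\[
    H(X_1, \dots, X_T) = \sum_{i=1}^{T} H(X_i \mid X_1, \dots, X_{i-1}) = \sum_{i=1}^{T} a_i,
\]
which is the decomposition already recorded in the statement and which isolates the per-position cost $a_i$ as the quantity to analyze; the vanishing of $\lim_i a_i$ will then be an additional consequence of the finiteness hypothesis.

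The heart of the argument is the inequality $a_i \leq a_{i-1}$, which I would obtain by composing two elementary facts. The first is that conditioning cannot increase entropy, so discarding the oldest conditioning variable can only raise the value, giving $H(X_i \mid X_1, \dots, X_{i-1}) \leq H(X_i \mid X_2, \dots, X_{i-1})$. The second is shift-invariance: stationarity forces the joint law of $(X_2, \dots, X_i)$ to coincide with that of $(X_1, \dots, X_{i-1})$, so relabeling indices downward by one gives $H(X_i \mid X_2, \dots, X_{i-1}) = H(X_{i-1} \mid X_1, \dots, X_{i-2}) = a_{i-1}$. Chaining the two yields $a_i \leq a_{i-1}$, so the conditional entropies, and with them the perplexities, are non-increasing; this alone already establishes the existence of weak length extension.

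It remains to treat the limit. Since entropy is non-negative, $(a_i)_i$ is non-increasing and bounded below by $0$, hence converges to some $L \geq 0$ by monotone convergence. To force $L = 0$ I would read the finiteness assumption as boundedness of the total entropy, $\sup_T H(X_1, \dots, X_T) < \infty$; then the series $\sum_i a_i$ converges, and a convergent series of non-negative terms must have $a_i \to 0$, so $L = 0$. The main obstacle I anticipate is precisely this step: the monotonicity needs only stationarity of adjacent marginals, whereas $L = 0$ requires the total entropy to remain bounded as $T \to \infty$ (summability of the $a_i$), which is strictly stronger than finiteness of $H$ at each individual length. I would therefore state this boundedness hypothesis explicitly and emphasize that weak length extension itself rests only on the monotonicity, not on the limit being zero.
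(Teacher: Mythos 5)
Your proposal is correct and follows essentially the same route as the paper: the chain rule decomposition, conditioning-reduces-entropy combined with shift-invariance to obtain $a_i \leq a_{i-1}$ (this is exactly the paper's appendix proposition), and then summability of the non-negative terms to force the limit to zero. Your explicit caveat that the limit-zero conclusion requires uniform boundedness $\sup_T H(X_1, \dots, X_T) < \infty$ rather than mere per-length finiteness---an i.i.d.\ process with positive per-symbol entropy satisfies the latter but has $a_i$ constant and nonzero---is a sharper and more honest reading of the paper's terse ``by the boundedness of $H$'' step, and is worth stating explicitly as you do.
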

See the proof based on the information theory in \cref{subsec:existence_of_weak_length_extension}.
By the universal approximation property of recurrent state-space models \citep{wang2023.StatespaceModelsLayerwisea}, we know the target autoregressive next-word prediction sequence $ \bigg (X_1, (X_2 | X_1), \dots, (X_T | X_1, \dots, X_{T-1}) \bigg )$ can be approximated by the recurrent model $\bigg ( \mathbf{Model}(\emptyset), \mathbf{Model}(X_1), \dots, \mathbf{Model}(X_1, \dots, X_{T-1}) \bigg )$. 
Therefore the entropy of sequence model $\lim_{T \to \infty} H(\mathbf{Model}(X_1, \dots, X_{T-1})) = \lim_{T \to \infty} H(X_T | X_1, \dots, X_{T-1})$ is also decaying to 0.

\section{Main results}

In this section, we present a theoretical analysis of the challenges associated with length extension in SSMs that have zero-initialized hidden states, as detailed in \cref{subsec:length_extension_is_extrapolation}.
We illustrate that doing well in length extension is analogous to performing well in polynomial extrapolation.
In \cref{subsec:convert_the_extrapolation_to_interpolation}, we argue that setting proper initialization for hidden states can transform the extrapolation challenge into an interpolation problem, thereby improving length extension performance.

\subsection{Length extension is extrapolation}
\label{subsec:length_extension_is_extrapolation}

In approximation theory, state-space models are universal approximators for bounded causal continuous time-homogeneous regular nonlinear functionals, as detailed by \citet{wang2023.StatespaceModelsLayerwisea}. 
This outcome ensures the existence of a suitable model capable of learning the sequence-to-sequence relationships over unbounded time horizon $(-\infty, t)$ with any desirable tolerance.
In practice, models are trained with a fixed finite length $T$. 
Therefore, it becomes crucial to assess whether such finite-window-trained models can effectively capture long-term memory beyond their training scope.
In this context, we explore length extension in a simplified linear framework, which can be similarly extended to multi-layer nonlinear SSMs.

Consider the learning of linear functionals~\citep{li2020.CurseMemoryRecurrent,jiang2023.BriefSurveyApproximationa} by single-layer state-space model, this linear functional target comes with a unique representation: 
$y_t = \mathbf{H}_t(\mathbf{x}) = \int_{0}^\infty \rho_{s} x_{t-s} ds$ with $|\rho|_1 := \int_{0}^\infty |\rho_{s}| ds < \infty$
while the single-layer state-space model (without layerwise activation) can be represented by 
$\hat{y}_t = \widehat{\mathbf{H}}_t(\mathbf{x}) = \int_{0}^T C e^{Ws} U x_{t-s} ds + \hat{y}_0.$
The learning of target $\mathbf{H}$ by model $\widehat{\mathbf{H}}$ is equivalent to approximating the memory function $\rho(t): [0, \infty) \to \mathbb{R}$ with the SSM memory kernel $\hat{\rho}(t) = C e^{Wt} U$. 
Consider the following error decomposition
\begin{align*}
    & |y_T - \hat{y}_T| = \left | \int_{0}^\infty \rho_s x_{T-s} ds - \left ( \int_{0}^T \hat{\rho}_s x_{T-s} ds + \hat{y}_0 \right ) \right | \\
    & \leq \left | \int_{T}^\infty \rho_s x_{T-s} ds - \hat{y}_0 \right | + \left | \int_{0}^T \rho_s x_{T-s} ds - \int_{0}^T \hat{\rho}^*_s x_{T-s} ds \right | + \left | \int_{0}^T \hat{\rho}^*_s x_{T-s} ds - \int_{0}^T \hat{\rho}_s x_{T-s} ds \right |.
\end{align*}

Here $\hat{\rho}^*$ is the ``optimal'' model memory function while $\hat{\rho}$ is the achieved model memory function. 
The three terms in the error decomposition correspond to the \textbf{length extension error}, \textbf{finite time approximation error}, \textbf{optimization error}. 
For any fixed target $\mathbf{H}$, as the hidden dimension $m$ increases, the finite time approximation error decays to 0.
Given sufficient data and abundant computational resources, the optimization error decreases to zero through gradient-based optimization. 
However, the length extension error cannot be reduced by simply increasing hidden dimension or improve the training over finite context data.
    
During the inference, the error decomposition for $t>T$ is 
\begin{align}
    |y_t - \hat{y}_t| & \leq \left |\int_t^{\infty} \rho_{s} x_{t-s} ds - \hat{y}_0 \right | + \left |\int_T^t (\rho_{s} - \hat{\rho}_{s}) x_{t-s} ds \right | + \left |\int_0^T (\rho_{s} - \hat{\rho}_{s}) x_{t-s} ds \right |. 
\end{align}
With the first error unobserved and third error minimized in training, the major error for length extrapolation is the second term which be bounded by the form of $\int_T^t |\rho_{s} - \hat{\rho}_{s}| ds$.
By change of variable $u = e^{-s}$, take $\mathcal{T} \rho_u = \rho_{- \log u}, u \in (0, 1]$. 
\begin{equation}
    \int_T^t |\rho_{s} - \hat{\rho}_{s}| ds = \int_{e^{-t}}^{e^{-T}} \left | \mathcal{T}\rho_{u} - \sum_{k=1}^m c_i u^\lambda_i \right | \frac{1}{u} du. 
\end{equation}
The error between $[T, t]$ is equivalent to evaluate the polynomial extrapolation error of $\frac{\mathcal{T}\rho_{u}}{u}$ over $u \in [e^{-t}, e^{-T}]$. 
This is said to be polynomial extrapolation as the coefficient of the polynomials are only fitted over interval $u \in [e^{-T}, 1]$. 
As the models are usually overparameterized, the minimizer of truncated loss $E_{[0, T]}$ is not the global minimizer for $E_{[0, \infty)}$. 
In \cref{subsec:overfit_in_length_extension}, we further show the similarity between nonlinear state-space model length extension and polynomial extrapolation. In particular, the overfitting phenomenon gets worse as the number of parmaeters increased. 

\begin{figure*}[ht!]{
    \centering
    \includegraphics[width=0.75\textwidth]{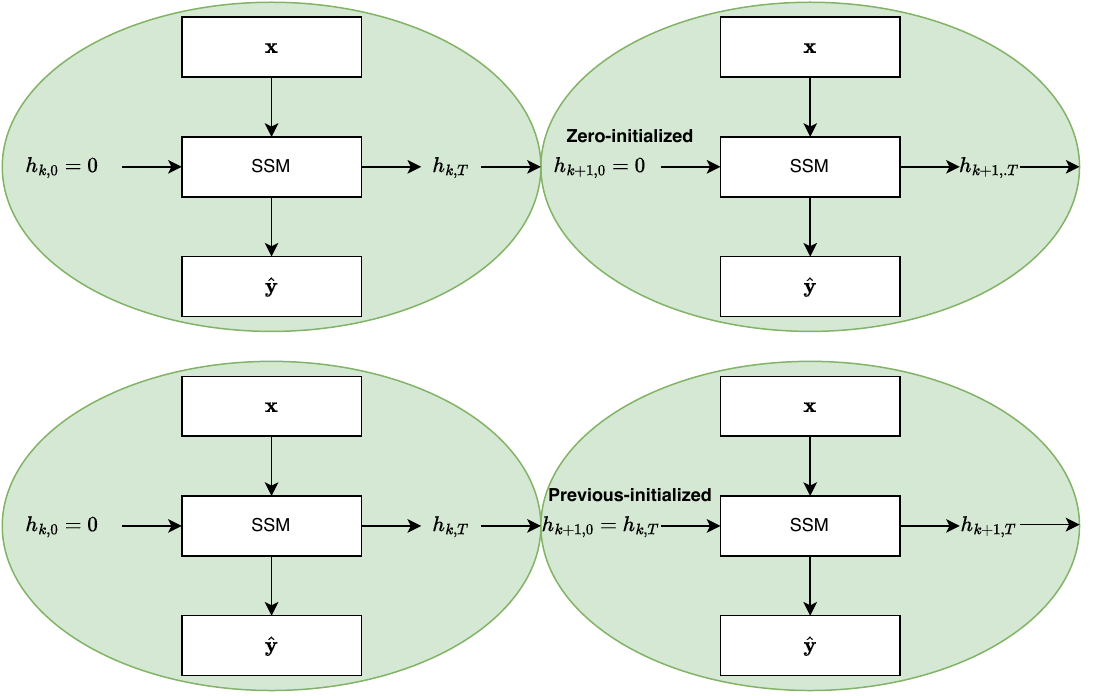}
    \caption{Graphical demonstration of the difference between zero-initialized hidden states and previous-initialized hidden states (truncated backpropagation through time) in training. 
    }
    \label{fig:method}
}
\end{figure*}

\begin{figure*}[t!]{
    \centering
    \subfigure[][Validation loss]{\includegraphics[width=0.47\textwidth]{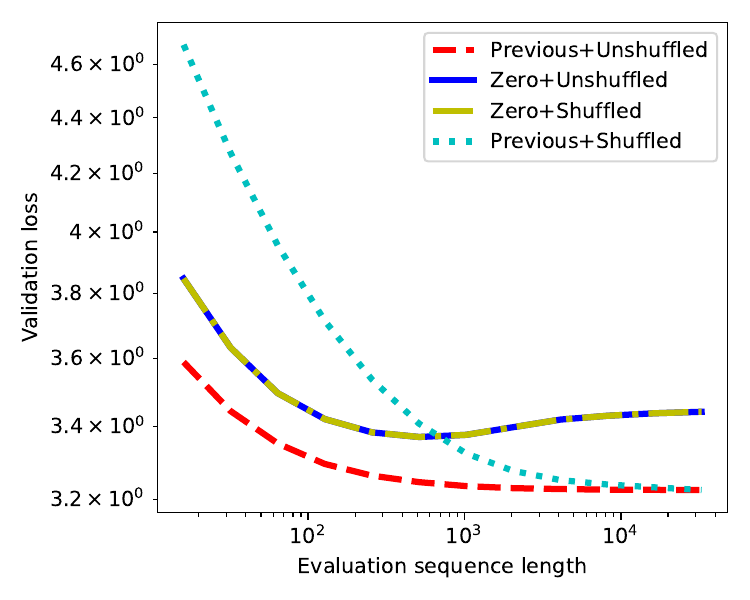}
    }
    \subfigure[][Test loss]{\includegraphics[width=0.47\textwidth]{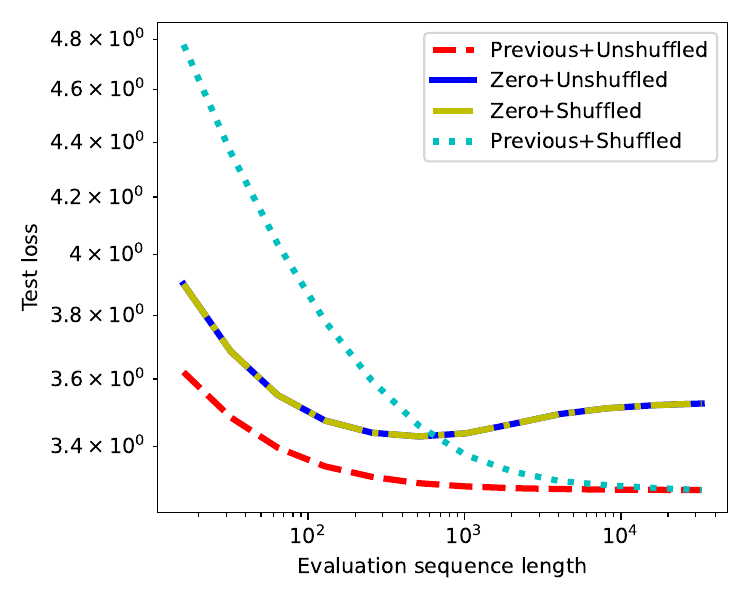}
    }
    \caption{
        Comparison of two hidden states initialization methods over 6-layer Mamba with 30M parameters. 
        Both the zero-initialized and previous-initialized models are trained over training sequence length $T=32$.
        The zero-initialized model has difficulty extrapolating beyond 1024 while the the previous-initialized model has length extrapolation up to $T=32768$. 
        While the previous hidden state methods achieve the length extension over unshuffled test dataset, when the data is shuffled, models trained with previous hidden state also suffer from the noisy information in the hidden states. 
    }
    \label{fig:comparison_of_two_hidden_states_initailizations}
}
\end{figure*}

\subsection{Convert the extrapolation to interpolation}
\label{subsec:convert_the_extrapolation_to_interpolation}

In the training of state-space models, the hidden states are usually zero-initialized between different batches. 
As shown in \cref{fig:method}, we set the initialization of hidden states from the previous batch $h_{k+1, 0}=h_{k, T}$ instead of zeros $h_{k,0}=0$. 
This corresponds to the truncated backpropagation through time method~\citep{jaeger2002.TutorialTrainingRecurrent}.
This change of hidden states initialization requires the dataloader to load consecutive text instead of shuffling them in the batch level. 
We compare the effects of data shuffling on the following two initialization schemes in \cref{fig:comparison_of_two_hidden_states_initailizations}.

The zero-initialized model gives almost the same loss curves over the shuffled dataset and unshuffled dataset. 
In contrast, the model trained with previous-initialized hidden sates have smaller validation/test loss and monotonically decreasing loss in the length extension sense. 
As the previous-initialized model suffer when the evaluation dataset is shuffled dataset, it indicates that the model does extract the information from the non-zero previous hidden states.

\section{Numerical results}

In this section, we first provide the numerical evidence that training with longer context is generally better but not necessary for length extension~(\cref{subsec:longer_training_context_is_beneficial_but_not_necessary_for_length_extension}). 
Then, we further demonstrate that with previous-initialized hidden states, the models can achieve even better length-extension performance that general proper-trained zero-initialized models~(\cref{subsec:previous_initialized_hidden_states_improve_the_length_extension_capability}). 
The disadvantage of this previous-initialized training is discussed in \cref{subsec:on_the_disadvantages_of_previous}.

\begin{figure*}[tbh!]{
    \centering
    \subfigure[][S5]{\includegraphics[width=0.47\textwidth]{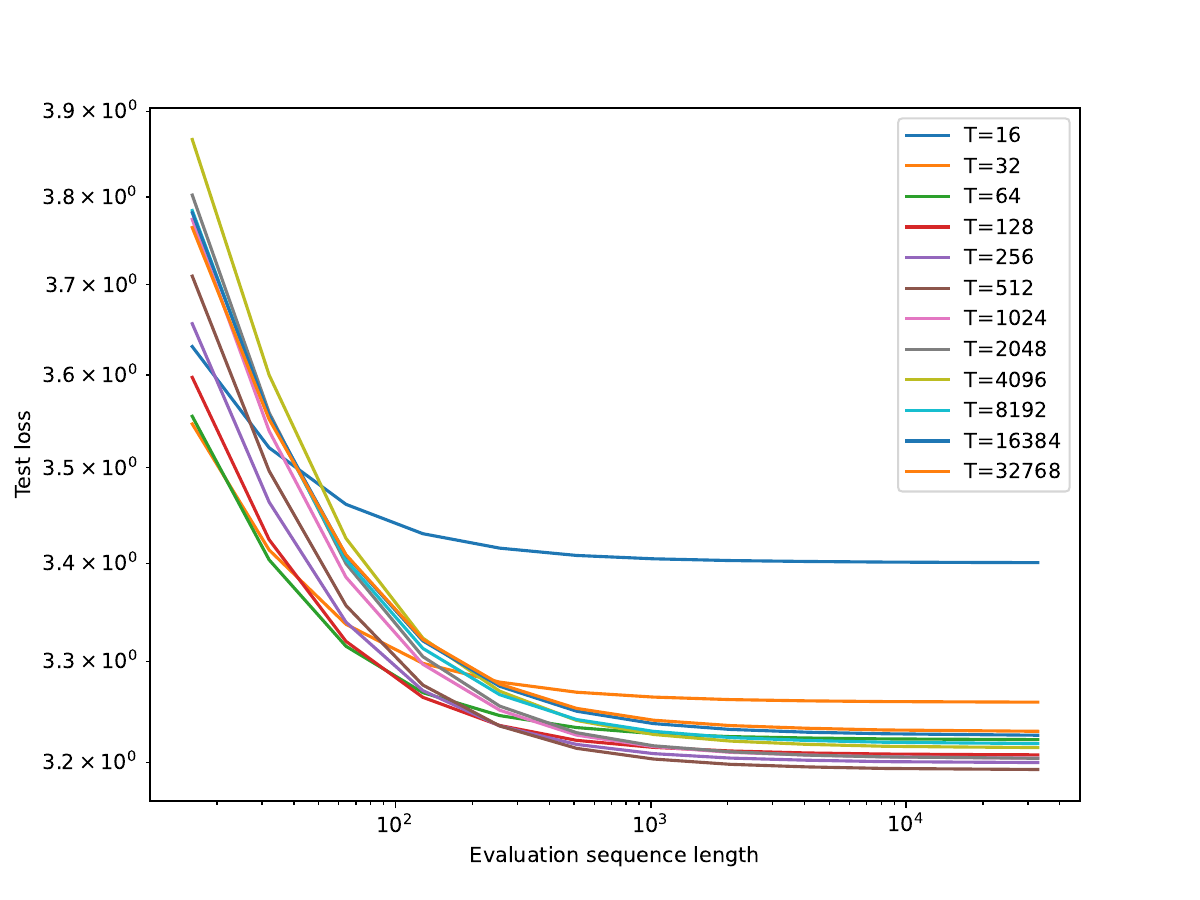}
    }
    \subfigure[][Mamba]{\includegraphics[width=0.47\textwidth]{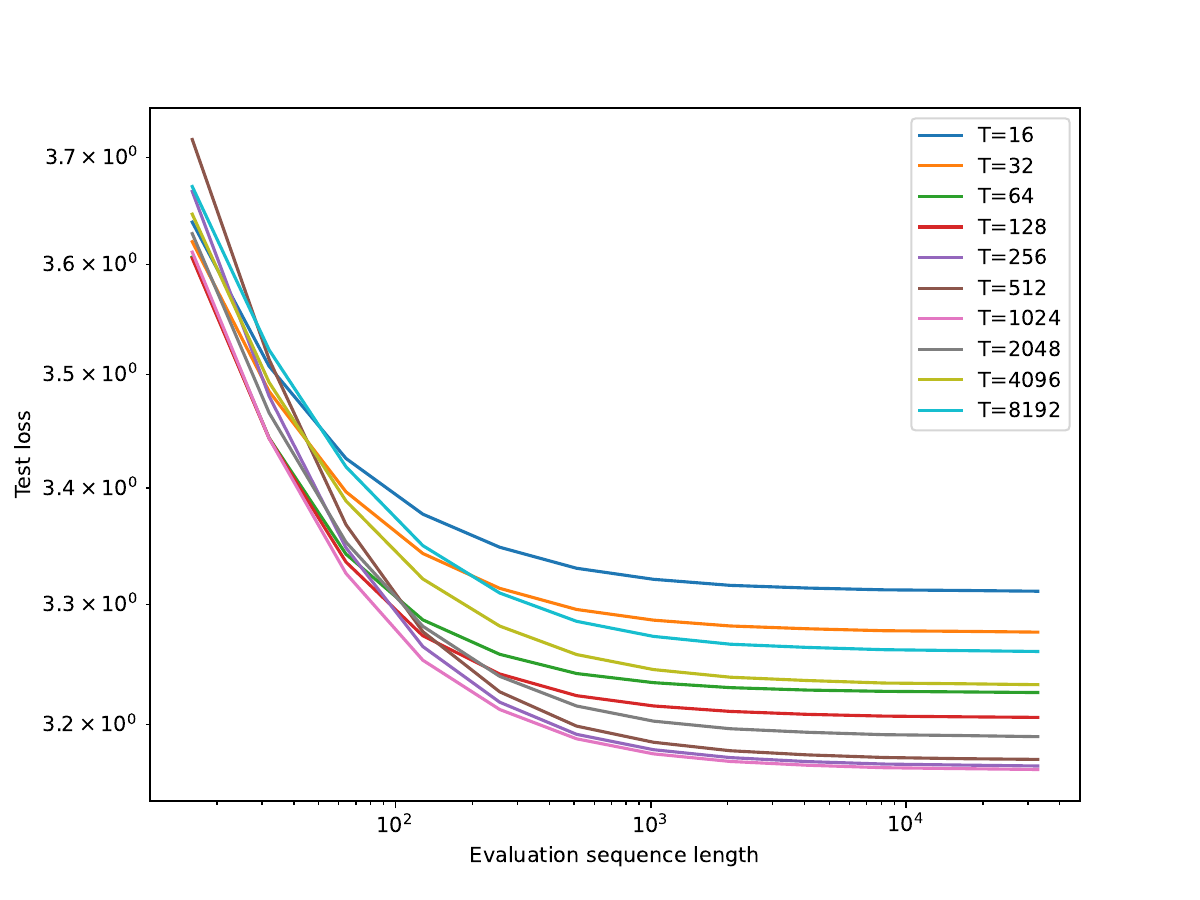}
    }
    \caption{Length extension of models trained with different sequence length using previous-initialized hidden states. 
        We train 6-layer S5~\citep{smith2023.SimplifiedStateSpace} up to training length $T=32768$ and 6-layer Mamba~\citep{gu2023.MambaLinearTimeSequence} up to training length $T=8192$. 
        Mamba has a larger hidden states dimension therefore the maximum training length is smaller (on the same GPU). 
        It can be seen that training with sequence length $T=1024$ is slightly better than shorter/longer sequence length. 
    }
    \label{fig:S5_Mamba_longer_is_generally_better}
    }
\end{figure*}

\subsection{Longer training context is \textbf{beneficial but not necessary} for length extension}
\label{subsec:longer_training_context_is_beneficial_but_not_necessary_for_length_extension}

We show in \cref{fig:S5_Mamba_longer_is_generally_better} that since inheriting the previous hidden states are approximating the gradient with longer training sequence length, the performance of models trained over longer sequence generally have better length extension capability. 
Since the models with previous-initialized hidden states have monotonically decreasing perplexity, therefore the length extension property can be achieved without long training sequence length.

\begin{figure*}[ht!]{
    \centering
    \includegraphics[width=0.63\textwidth]{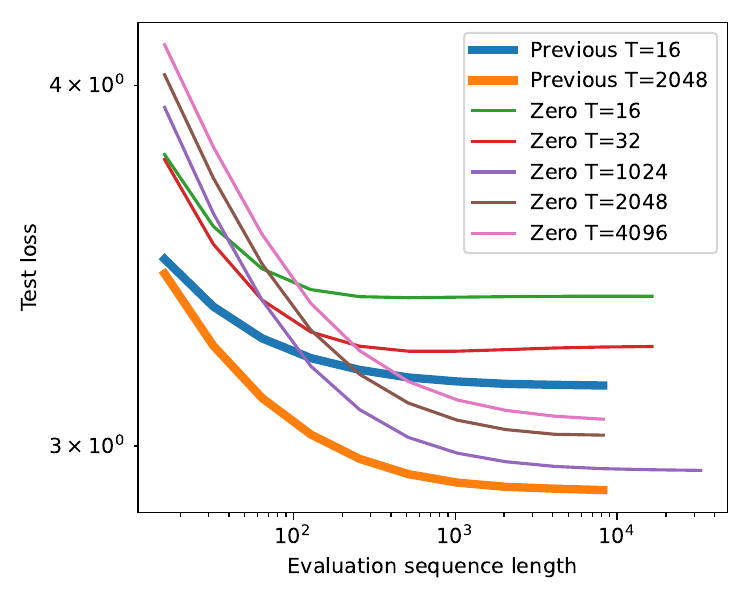}
    \caption{
        We change the training sequence length and show that, despite zero-initialized models showing adequate length extension capabilities, models trained with previous hidden states consistently surpass them across all evaluated sequence lengths. 
        Throughout our experiments, Mamba models with 180M parameters trained on the Wikitext103 dataset maintain consistent training settings.
    }
    \label{fig:benfit_from-longer_sequences}
}
\end{figure*}

\subsection{Previous initialized hidden states improve the length extension capability}
\label{subsec:previous_initialized_hidden_states_improve_the_length_extension_capability}

In \cref{fig:benfit_from-longer_sequences}, we present the length extension curves for the 180M Mamba model, trained using various sequence lengths and different schemes for (training) hidden states initializations.
The model with previous initialization, when trained on sequences of length $T=16$, outperforms the zero-initialized model trained on both $T=16$ and $T=32$ sequence lengths.
Furthermore, the model trained with a sequence length of $T=2048$ demonstrates superior length extension performance across both short and long sequences compared to all models with zero initialization.
All these models are trained in the same hyperparameter setting.

\begin{figure*}[ht!]{
    \centering
    \subfigure[][Zero-initialized training]{\includegraphics[width=0.47\textwidth]{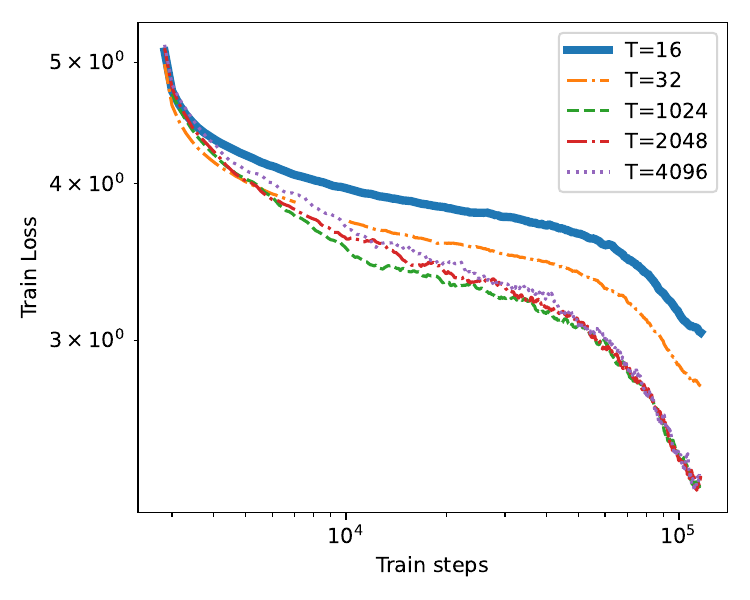}}
    \subfigure[][Previous-initialized training]{\includegraphics[width=0.47\textwidth]{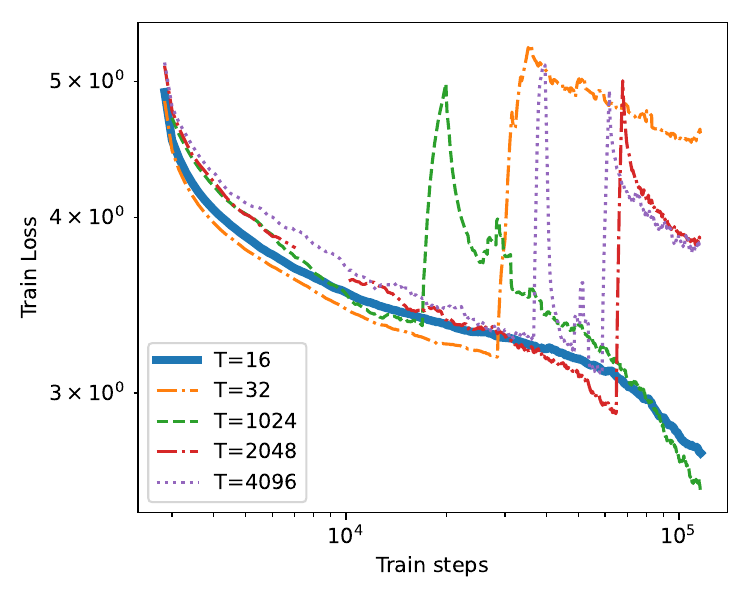}}
    \caption{
        Under the same training setting (learning rate is 0.001), we show that the training of 140M previous-initialized S5~\citep{smith2023.SimplifiedStateSpace} come with severe \textbf{training instability}. 
    }
    \label{fig:140M_S5_stability}
    }
\end{figure*}

\subsection{On the disadvantages of previous-initialized training}
\label{subsec:on_the_disadvantages_of_previous}

In the previous subsections, we explore the advantages of length extension through modifications in hidden states initialization during training. 
Here we evaluate the efficacy of the previously-initialized (truncated-BPTT) method and uncover the significant challenges it presents in terms of training stability.
As illustrated in \cref{fig:140M_S5_stability}, particularly with relatively large models, training the 140M S5 model with previously-initialized hidden states (on the right) exhibits notable instability compared to zero-initialized training (on the left).
As training setting are the same, this result shows the instability of current previous-initialized training. 
In \cref{subsec:sensitivity_of_recurrent_weights}, we further examine stability through the lens of both hidden state bounds and weight precision in the setting of long-term memory learning.
Future direction includes the study of achieving length extension in a more stable manner.

\section{Related works}

Recurrent neural networks~\citep{rumelhart1986.LearningRepresentationsBackpropagating} are widely used in sequence modeling. 
Variants such as LSTM~\citep{hochreiter1997.LongShorttermMemory} and GRU~\citep{cho2014.LearningPhraseRepresentations} are efficient to model sequence-to-sequence relationship but they suffer from problems such as vanishing/exploding gradient~\citep{bengio1994.LearningLongtermDependencies, hochreiter1998.VanishingGradientProblem} and exponentially decaying memory~\citep{jiang2023.BriefSurveyApproximationa,wang2023.InverseApproximationTheory,wang2023.StatespaceModelsLayerwisea}. 
As nonlinear RNNs cannot be parallelized in time, back propagation through time (BPTT) \citep{jaeger2002.TutorialTrainingRecurrent} is widely used to speed up the training of long sequences. 
State-space models~\citep{gu2021.EfficientlyModelingLong,wang2023.StableSSMAlleviatingCurse} relax the training difficulty as the linear RNN layer can be parallelized (in time) via FFT or associative scan~\citep{martin2018.ParallelizingLinearRecurrent}.
Mamba~\citep{gu2023.MambaLinearTimeSequence} shows that recurrent models without recurrent nonlinearity can have matching performance against transformer over many tasks while maintaining a low inference cost. 

For transformers, Rotary Position Embedding (RoPE)~\citep{su2022.RoFormerEnhancedTransformer} integrates relative positional information into the attention matrix but still cannot achieve reasonable performance beyond the pretrained length.
Position Interpolation (PI)~\citep{chen2023.ExtendingContextWindow} introduces a linear rescaling in RoPE and achieves the extension from 2048 to 32768.
In \citet{chen2023.CLEXContinuousLength}, they introduce a trainable neural ODE~\citep{chen2019.NeuralOrdinaryDifferential} into the position encoding, enabling more fine-grained long context extension.
Additive bias~\citep{raffel2020.ExploringLimitsTransfer} is another approach to achieve the length extension. 
ALiBi~\citep{press2022.TrainShortTest, al-khateeb2023.PositionInterpolationImproves} is the first effective method to do length extensions, it has been shown to have monotonically decreasing perplexity up to length 3072 for models trained over 64. 

It is well known that polynomial extrapolation are ill-conditioned\citep{demanet2019.StableExtrapolationAnalytic} and global minimizers of under-determined system are not unique. 
Empirical evidence~\citep{chen2023.ExtendingContextWindow} shows the difficulty of extrapolation in the sense that almost every learned curve has the extrapolation issue.

\section{Conclusion}

In this paper, we investigate the length extension problem in language modeling, particularly focusing on state-space models. 
We emphasize the challenge faced by zero-initialized SSMs in achieving length extension, which essentially boils down to a problem of polynomial extrapolation.
Building upon the above observation, we adopt a simple yet effective hidden states initialization scheme during training.
This method significantly enhances the model's performance on longer contexts without compromising its effectiveness on shorter ones. 
A model with training length $T=16$ can extend to $T=32K$, showcasing a consistent decrease in perplexity, as illustrated in \cref{fig:comparison_of_two_hidden_states_initailizations}. 
Contrary to the common believe that backpropagation is restricted to training lengths of 10-20x~\citep{jaeger2002.TutorialTrainingRecurrent}, our approach is beneficial when the primary goal is length extension, leading to a dramatic reduction in GPU memory requirements—by up to 2000 times (from 32768 to 16).
This discovery suggests that training state-space models with \textbf{longer training contexts is desirable but not necessary} for achieving effective length extension.

\newpage

\section*{Acknowledgements}

Shida Wang is supported by NUS-RMI Scholarship. 
We thank Qian Liu, Tianbo Li, Chao Du, Min Lin for helpful discussions.

\bibliography{example_paper}

\newpage
\appendix

\section{Comparison of state-space models and nonlinear recurrent neural networks}
\label{sec:comparison_ssm_rnn}

Here we give the formulation of single-layer recurrent neural networks (RNNs)~\citep{rumelhart1986.LearningRepresentationsBackpropagating}. 
In nonlinear RNNs the activation $\sigma$ is applied in the temporal direction. 
\begin{align}
    h_{k+1}   & = \bm{\sigma}(Wh_k + Ux_k + b), \quad h_0 = 0\\
    \hat{y}_k & = C h_k, \quad 1 \leq k \leq T.
\end{align}
The corresponding continuous-time form of RNNs is 
\begin{align}
    \frac{dh_t}{dt} = \bm{\sigma}(Wh_t + Ux_t + b), \quad \hat{y}_t = C h_t. 
\end{align}

\paragraph{Truncated backpropagation through time} 
Due to the nonlinear dynamics of nonlinear RNNs, backpropagation through time (BPTT)~\citep{jaeger2002.TutorialTrainingRecurrent} is the standard approach to evaluate the gradient.
Due to the vanishing/exploding gradient issue~\citep{bengio1994.LearningLongtermDependencies, hochreiter1998.VanishingGradientProblem}, truncated backpropagation through time ~\citep{jaeger2002.TutorialTrainingRecurrent} is widely used to speedup the training. 
In this paper, our previous-initialized hidden state is similar to this T-BPTT method.

\section{Theoretical backgrounds}

The definitions and theorems are well-known results from information theory. 
We collect the definition for the completeness~\citep{cover2006.ElementsInformationTheory}. 

\subsection{Entropy, conditional entropy and chain rule}

Here we let $X$ and $Y$ denote random variable. 

Entropy is a measure of the uncertainty of a random variable:
\begin{equation}
    H(X) = \mathbb{E}_p \log \left (\frac{1}{p(X)} \right ).
\end{equation}

Joint entropy:
\begin{equation}
    H(X, Y) = \mathbb{E}_p \log \left (\frac{1}{p(X, Y)} \right ).
\end{equation}

Conditional entropy:
\begin{equation}
    H(Y|X) = \mathbb{E}_p \log \left (\frac{1}{p(Y|X)} \right ).
\end{equation}

Chain rule:
\begin{equation}
    H(X, Y) = H(X) + H(Y | X).
\end{equation}

\subsection{Relative entropy, mutual information}

The \textbf{relative entropy} $D(p || q)$ is a measure of the inefficiency of assuming that the distribution is $q$ when the true distribution is $p$. 
\begin{equation}
    D(p||q) = E_p \log \left (\frac{p(X)}{q(X)} \right ).
\end{equation}

Chain rule for relative entropy:
\begin{equation}
    D(p(x, y) || q(x, y))  = D(p(x) || q(x)) + D(p(y|x) || q(y|x)). 
\end{equation}

Mutual information:
\begin{align}
    I(X; Y) 
    & = E_{p(X, Y)} \log \frac{p(X, Y)}{p(X) p(Y)} \\
    & = H(X) - H(X|Y) \\
    & = H(X) + H(Y) - H(X,Y)
\end{align}

\begin{theorem}[Information inequality]
    \begin{equation}
        D(p || q) \geq 0.
    \end{equation}
    with equality if and only if $p(x) = q(x)$ for all x.
\end{theorem}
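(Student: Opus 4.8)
The plan is to prove the inequality by an application of Jensen's inequality to the concave logarithm, and then to extract the equality condition from the two places where slack can enter. First I would restrict attention to the support $A = \{x : p(x) > 0\}$, since terms with $p(x) = 0$ contribute nothing to $D(p||q) = \E_p \log(p(X)/q(X))$. On this set I would rewrite the quantity with a sign flip,
\begin{equation}
    -D(p||q) = \sum_{x \in A} p(x) \log \frac{q(x)}{p(x)}.
\end{equation}

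The key step is that $\log$ is concave, so Jensen's inequality applied with the probability weights $p(x)$ on $A$ gives
\begin{equation}
    \sum_{x \in A} p(x) \log \frac{q(x)}{p(x)} \leq \log \left ( \sum_{x \in A} p(x) \frac{q(x)}{p(x)} \right ) = \log \left ( \sum_{x \in A} q(x) \right ).
\end{equation}
Since $\sum_{x \in A} q(x) \leq \sum_{x} q(x) = 1$ and $\log$ is increasing, the right-hand side is at most $\log 1 = 0$. Chaining these bounds yields $-D(p||q) \leq 0$, that is $D(p||q) \geq 0$.

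For the equality characterization I would trace back through the two inequalities just used, since both must be tight simultaneously. Strict concavity of $\log$ forces the Jensen step to be an equality only when $q(x)/p(x)$ is constant across all $x \in A$; combined with the normalizations this constant must be $1$, so $q(x) = p(x)$ on $A$. The second inequality $\sum_{x \in A} q(x) \leq 1$ is tight only when $q$ places no mass outside $A$, which together with the previous point gives $q(x) = p(x)$ for every $x$. Conversely, if $p = q$ everywhere then each summand vanishes and $D(p||q) = 0$, completing the iff.

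I expect the main obstacle to be the equality analysis rather than the inequality itself: one must be careful that the two distinct sources of slack (the Jensen gap on the support and the possible loss of $q$-mass off the support) are handled separately, and that the degenerate terms where $p(x) = 0$ are treated by the support convention rather than by taking $\log 0$. The forward inequality is routine once concavity is invoked, so the writeup should spend its care on the equality direction.
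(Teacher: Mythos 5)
Your proof is correct: the Jensen's-inequality argument on the support of $p$, followed by the careful two-step equality analysis (constancy of $q/p$ on the support from strict concavity, plus tightness of $\sum_{x\in A} q(x) \leq 1$), is exactly the classical proof of the information inequality. The paper itself offers no proof of this theorem --- it states it as a known result collected for completeness from \citet{cover2006.ElementsInformationTheory} --- and your argument is precisely the standard one given in that reference, so there is nothing to reconcile.
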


\begin{corollary}[Nonnegativity of mutual information]
    For any two random variables, $X, Y$, 
    \begin{equation}
        I(X; Y) \geq 0. 
    \end{equation}
    This comes from the fact by taking $p$ to be $p(X, Y)$ and $q$ to be $p(X)p(Y)$. 
\end{corollary}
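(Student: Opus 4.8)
The plan is to recognize the mutual information as a special case of relative entropy and then invoke the Information inequality directly; no independent argument is needed. First I would observe that the defining expression
\begin{equation}
    I(X; Y) = E_{p(X, Y)} \log \frac{p(X, Y)}{p(X) p(Y)}
\end{equation}
has precisely the form of $D(p \| q)$ with $p = p(X, Y)$ taken to be the joint distribution and $q = p(X) p(Y)$ taken to be the product of the two marginals. The only preliminary point is to confirm that $q$ is itself a legitimate probability distribution on the same sample space as $p$: since $p(X)$ and $p(Y)$ arise as marginals of $p(X, Y)$, each is nonnegative and sums to one, so their product is a nonnegative function summing to one over the product space, hence a valid distribution to which the inequality applies.

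Having identified $I(X; Y) = D\big(p(X, Y) \,\|\, p(X) p(Y)\big)$, I would simply apply the Information inequality, which asserts $D(p \| q) \geq 0$ for any pair of distributions $p, q$ on a common space. Substituting $p = p(X, Y)$ and $q = p(X) p(Y)$ then yields $I(X; Y) \geq 0$ at once. The equality clause of the Information inequality, namely $p = q$, specializes to $p(X, Y) = p(X) p(Y)$, i.e.\ to independence of $X$ and $Y$, which is the expected characterization of when the mutual information vanishes.

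The one place I would take care is not in any estimate but in checking the hypotheses of the Information inequality: one must ensure support-compatibility, so that the argument of the logarithm is well-defined wherever $p(X, Y) > 0$. This holds automatically, because $p(X, Y) > 0$ forces both marginals to be positive at the corresponding coordinates, so the quotient $p(X,Y)/(p(X)p(Y))$ never involves division by zero on the support of $p$. Consequently there is no genuine obstacle: the corollary is a one-line specialization of the Information inequality, and I expect the entire argument to reduce to the substitution described above.
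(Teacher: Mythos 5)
Your proof is correct and is essentially identical to the paper's: both identify $I(X;Y)$ as $D\bigl(p(X,Y)\,\|\,p(X)p(Y)\bigr)$ and apply the Information inequality. Your additional checks (that the product of marginals is a valid distribution, support compatibility, and the equality case giving independence) are sound refinements of the same one-line argument.
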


\begin{theorem}[Conditioning reduces entropy]
\label{thm:conditioning_reduced_entropy}
    \begin{equation}
        H(X|Y) \leq H(X). 
    \end{equation}
    with equality if and only if $X$ and $Y$ are independent. 
\end{theorem}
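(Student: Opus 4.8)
The plan is to express the claim entirely in terms of mutual information and then invoke the results already assembled earlier in this section. First I would recall the identity stated in the definition of mutual information, namely $I(X;Y) = H(X) - H(X|Y)$. Rearranging gives $H(X) - H(X|Y) = I(X;Y)$, so the inequality $H(X|Y) \leq H(X)$ is logically equivalent to the single assertion $I(X;Y) \geq 0$.

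The inequality itself then follows at once from the Corollary on nonnegativity of mutual information, which already records that $I(X;Y) \geq 0$ for any pair of random variables. Combining this with the identity above yields $H(X|Y) \leq H(X)$ with no further work.

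For the equality condition I would trace the chain back through the definition $I(X;Y) = D(p(X,Y) \| p(X)p(Y))$. By the Information inequality theorem, $D(p \| q) \geq 0$ holds with equality if and only if $p(x) = q(x)$ for all $x$. Taking $p = p(X,Y)$ and $q = p(X)p(Y)$, equality in $I(X;Y) \geq 0$ therefore holds precisely when $p(X,Y) = p(X)p(Y)$ for all values, i.e. when $X$ and $Y$ are independent. Substituting back into the identity, $H(X|Y) = H(X)$ if and only if $X$ and $Y$ are independent.

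This proof presents essentially no obstacle, since every ingredient has already been established: the decomposition of $I(X;Y)$ into entropies, its nonnegativity, and its identification with a relative entropy. The only point requiring care is the bookkeeping in the equality case, namely checking that the ``if and only if'' clause of the information inequality transfers correctly through the identification of $I(X;Y)$ with $D(p(X,Y) \| p(X)p(Y))$, so that the independence characterization is faithfully inherited.
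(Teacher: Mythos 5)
Your proof is correct and follows exactly the route the paper sets up: the paper states this result without an explicit proof (citing Cover and Thomas), but the surrounding machinery it assembles --- the identity $I(X;Y)=H(X)-H(X|Y)$, the information inequality $D(p\|q)\geq 0$ with equality iff $p=q$, and the corollary on nonnegativity of mutual information obtained by taking $p=p(X,Y)$ and $q=p(X)p(Y)$ --- is precisely the chain you invoke, including the equality/independence case. Nothing is missing; your bookkeeping of the equality condition through the relative-entropy identification is the standard and intended argument.
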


\subsection{Riesz representation theorem for linear functional}
\label{subsec:Riesz_representation_theorem}

\begin{theorem}[Riesz-Markov-Kakutani representation theorem]
\label{thm:riesz_representation_theorem}
    Assume $H : C_0(\mathbb{R}, \mathbb{R}^d) \mapsto \mathbb{R}$ is a linear and continuous functional. 
    Then there exists a unique, vector-valued, regular, countably additive signed measure $\mu$ on $\mathbb{R}$ such that
    \begin{align}
        H(\mathbf{x}) = \int_{\mathbb{R}} x_s^\top d\mu(s)
        = \sum_{i=1}^{d} \int_{\mathbb{R}} x_{s,i} d\mu_i(s).
    \end{align}
    In addition, we have the linear functional norm 
    \begin{equation}
    \label{eq:linear_functional_norm}
        \| H \|_{\infty} := \sup_{\| \mathbf{x} \|_\mathcal{X} \leq 1} | H(\mathbf{x}) | = \|\mu\|_1(\mathbb{R}) := \sum_i |\mu_i|(\mathbb{R}).
    \end{equation}
\end{theorem}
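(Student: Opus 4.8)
The plan is to reduce the vector-valued statement to $d$ copies of the classical scalar Riesz representation theorem for $C_0(\mathbb{R})$ and then reassemble, exploiting that the domain norm is the pointwise max norm so that the coordinates decouple. First I would decouple the coordinates: for each $i \in \{1, \dots, d\}$ let $\iota_i : C_0(\mathbb{R}, \mathbb{R}) \to C_0(\mathbb{R}, \mathbb{R}^d)$ embed a scalar function $f$ as $f e_i$ in the $i$-th direction, and set $H_i := H \circ \iota_i$. Since $H$ is linear and continuous and $\iota_i$ is a linear isometry onto a closed subspace, each $H_i$ is a bounded linear functional on $C_0(\mathbb{R})$. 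By linearity any $\mathbf{x} = (x_1, \dots, x_d)$ splits as $\mathbf{x} = \sum_i x_i e_i$, so $H(\mathbf{x}) = \sum_i H_i(x_i)$, and it suffices to represent each scalar $H_i$.

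Second I would establish the scalar theorem: every bounded linear functional $\Lambda$ on $C_0(\mathbb{R})$ is represented by a unique regular signed Borel measure. The route is Jordan decomposition followed by the positive Riesz-Markov construction. I would split $\Lambda$ into positive and negative parts by setting, for $f \ge 0$ in $C_0$, $\Lambda^+(f) = \sup\{\Lambda(g): 0 \le g \le f\}$, verifying additivity and positive homogeneity on the nonnegative cone, extending to a positive linear functional, and taking $\Lambda^- = \Lambda^+ - \Lambda$ (also positive and bounded since $\Lambda$ is). For each positive functional I would build a Borel measure through the set function $\mu(V) = \sup\{\Lambda(f): 0 \le f \le 1,\ \mathrm{supp}\, f \subset V\}$ on open $V$, extend via the outer measure $\mu^*(E) = \inf\{\mu(V): E \subset V \text{ open}\}$, and invoke Carath\'eodory to obtain a countably additive regular Borel measure. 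Urysohn's lemma and partitions of unity, available because $\mathbb{R}$ is locally compact and $\sigma$-compact, then yield $\Lambda^\pm(f) = \int f\, d\mu^\pm$, and I set $\mu_i = \mu^+ - \mu^-$.

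Third I would assemble $\mu = (\mu_1, \dots, \mu_d)$ to conclude $H(\mathbf{x}) = \sum_i \int x_{s,i}\, d\mu_i(s)$. Uniqueness follows because if $\mu, \nu$ both represent $H$, then each coordinate difference $\mu_i - \nu_i$ integrates to zero against every $f \in C_c(\mathbb{R})$, and regularity forces $\mu_i = \nu_i$, since a finite signed regular measure annihilating all compactly supported test functions is zero. For the norm identity in \eqref{eq:linear_functional_norm}, I would use that the domain norm is $\|\mathbf{x}\|_\mathcal{X} = \sup_s \max_i |x_{s,i}|$ (the $|\cdot|_\infty$ of the notation), so the constraint $\|\mathbf{x}\|_\mathcal{X} \le 1$ decouples into $\|x_i\|_\infty \le 1$ independently across $i$. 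The upper bound $|H(\mathbf{x})| \le \sum_i |\mu_i|(\mathbb{R})$ is immediate from $|\int x_{s,i}\, d\mu_i| \le \|x_i\|_\infty\, |\mu_i|(\mathbb{R})$, and the reverse comes from choosing in each coordinate independently an $x_i$ approximating the unimodular density $d\mu_i/d|\mu_i|$ (via regularity and Lusin's theorem) so that $\int x_{s,i}\, d\mu_i \to |\mu_i|(\mathbb{R})$ simultaneously.

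The main obstacle is the scalar positive Riesz-Markov construction, namely verifying that the set function $\mu$ built from $\Lambda$ is a genuine countably additive regular Borel measure (Carath\'eodory measurability of the Borel sets and the matching of $\mu(V)$ with the integral representation). This is standard but technical, and I expect the countable-additivity and regularity verification to be the most delicate piece; I would lean on the $\sigma$-compactness of $\mathbb{R}$ to simplify the regularity arguments and on the lattice structure of $C_0$ to justify the positive/negative decomposition of $\Lambda$.
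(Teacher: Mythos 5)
The paper does not prove this statement at all: it is quoted in the appendix as classical background (the Riesz--Markov--Kakutani theorem), with the burden carried by the literature, so there is no ``paper proof'' to compare against. Your outline is the standard textbook argument (Jordan decomposition of the functional, the positive Riesz--Markov construction via the outer-measure/Carath\'eodory route, coordinate decoupling through the isometric embeddings $\iota_i$, and the total-variation norm identity via Lusin approximation of the unimodular density), and it is correct in structure; in particular you correctly observe that the norm identity $\|H\|_\infty = \sum_i |\mu_i|(\mathbb{R})$ hinges on the domain norm being $\sup_s \max_i |x_{s,i}|$, so that the unit-ball constraint decouples across coordinates and the coordinate dual norms add.

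Two points deserve tightening. First, the positive Riesz--Markov construction you describe naturally represents functionals on $C_c(\mathbb{R})$; to conclude for $C_0(\mathbb{R})$ you should note that boundedness of $\Lambda^{\pm}$ forces $\mu^{\pm}(\mathbb{R}) = \sup\{\Lambda^{\pm}(f) : 0 \le f \le 1,\ f \in C_c\} \le \|\Lambda^{\pm}\| < \infty$, and then extend the representation from $C_c$ to $C_0$ by density in the sup norm together with dominated convergence --- this finiteness is also what makes the total variations in the norm identity finite. Second, in the lower bound for the norm identity, the Lusin approximants to $d\mu_i/d|\mu_i|$ must themselves lie in $C_0$ (or $C_c$): multiply by a continuous compactly supported cutoff chosen via inner regularity so that $|\mu_i|(\mathbb{R} \setminus K)$ is small, which costs only an $\epsilon$. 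With these two repairs the argument is complete and matches the classical proof the paper implicitly invokes.
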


\section{Theoretical results and proofs}
\label{sec:theoretical_results}

In \cref{subsec:associativity}, we give the proof for input-dependent gating in state-space models is associative. 
In \cref{subsec:sensitivity_of_recurrent_weights}, we show the dependency of recurrent weights range with respect to the finite precision range and why the corresponding gradient values might be unbounded. 

\subsection{Input-dependent gating in state-space models is associative}
\label{subsec:associativity}

\begin{table*}[tbh!]
    \caption{Difference between S5, Mamba and Gated Linear Attention in terms of the recurrent weight and hidden states dimensions. 
    }
    \label{table:diffs_s5_mamba_gla}
    \centering
    \begin{tabular}{c|cc}
    \toprule
                    & $h$ is vector & $h$ is matrix \\
    \midrule
    $W$ diagonal   & S5~\citep{smith2023.SimplifiedStateSpace}, Mamba   & N/A \\
    $W$ full       & Traditional SSM & Gated Linear Attention~\citep{yang2023.GatedLinearAttention} \\
    \bottomrule
    \end{tabular}
\end{table*}

Consider the following binary operator defined for tuple element $(W, h)$ as follow:
\begin{equation}
    (W_1, h_1) \circ (W_2, h_2) = (W_2 \odot W_1, h_1 + W_1 \odot h_2).
\end{equation}
Notice that $W$ and $h$ can depend on input value $x$. 

\begin{theorem}[Associativity of binary operation in state-space models]
    \begin{equation}
        \bigg ((W_1, h_1) \circ (W_2, h_2) \bigg ) \circ (W_3, h_3) = (W_1, h_1) \circ \bigg( (W_2, h_2) \circ (W_3, h_3) \bigg)
    \end{equation}
\end{theorem}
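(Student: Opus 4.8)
The plan is to prove associativity by direct expansion of both sides and reducing the required identity to three elementary algebraic properties of the Hadamard product $\odot$: associativity, commutativity, and distributivity over addition. Since the composite is a tuple, I would verify the two components separately.

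First I would expand the left-hand side. Applying the definition once gives $(W_1, h_1) \circ (W_2, h_2) = (W_2 \odot W_1,\, h_1 + W_1 \odot h_2)$, and applying it a second time with $(W_3, h_3)$ yields
\begin{equation}
    \big((W_1, h_1) \circ (W_2, h_2)\big) \circ (W_3, h_3) = \big(W_3 \odot W_2 \odot W_1,\ h_1 + W_1 \odot h_2 + (W_2 \odot W_1) \odot h_3\big).
\end{equation}
Next I would expand the right-hand side in the opposite grouping. Here the inner composition is $(W_2, h_2) \circ (W_3, h_3) = (W_3 \odot W_2,\, h_2 + W_2 \odot h_3)$, and composing on the left with $(W_1, h_1)$ gives
\begin{equation}
    (W_1, h_1) \circ \big((W_2, h_2) \circ (W_3, h_3)\big) = \big(W_3 \odot W_2 \odot W_1,\ h_1 + W_1 \odot (h_2 + W_2 \odot h_3)\big).
\end{equation}
Here I have already used associativity of $\odot$ to write the first component without parentheses.

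To finish I would compare the two tuples component by component. The first components agree immediately by associativity of the element-wise product. For the second component, I would distribute $W_1 \odot (\cdot)$ over the sum on the right-hand side, turning it into $h_1 + W_1 \odot h_2 + W_1 \odot (W_2 \odot h_3)$, and then reassociate to $h_1 + W_1 \odot h_2 + (W_1 \odot W_2) \odot h_3$. Matching this against the left-hand side's $h_1 + W_1 \odot h_2 + (W_2 \odot W_1) \odot h_3$ reduces the whole claim to the single identity $W_1 \odot W_2 = W_2 \odot W_1$.

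The one place where any thought is needed — and hence the main (though very mild) obstacle — is precisely this last step: the coefficients of $h_3$ in the two expansions are $W_2 \odot W_1$ and $W_1 \odot W_2$, which coincide only because the Hadamard product is commutative. Everything else is bookkeeping. I would therefore make explicit at the outset that $\odot$ is the element-wise product on $\mathbb{R}^{m \times d}$ (or the diagonal/matrix settings of \cref{table:diffs_s5_mamba_gla}), so that commutativity, associativity, and distributivity over $+$ all hold entrywise, which legitimizes each manipulation above.
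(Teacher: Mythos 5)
Your proposal is correct and takes essentially the same route as the paper: the paper also proves associativity by expanding both groupings via the definition of $\circ$ and identifying the two tuples through the entrywise algebra of $\odot$, arriving at the common form $(W_3 \odot W_2 \odot W_1,\ h_1 + W_1 \odot h_2 + W_1 \odot W_2 \odot h_3)$. The only difference is that you explicitly isolate the commutativity step $W_1 \odot W_2 = W_2 \odot W_1$ needed to match the coefficient of $h_3$, which the paper uses silently when reordering factors.
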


\begin{proof}
\begin{align}
    & \bigg ((W_1, h_1) \circ (W_2, h_2)\bigg ) \circ (W_3, h_3) \\
    & = (W_2 \odot W_1, h_1 + W_1 \odot h_2) \circ (W_3, h_3) \\
    & = (W_3 \odot W_2 \odot W_1, h_1 + W_1 \odot h_2 + W_1 \odot W_2 \odot h_3) \\
    & = (W_1, h_1) \circ (W_3 \odot W_2, h_2 + W_2 \odot h_3) \\
    & = (W_1, h_1) \circ \bigg ((W_2, h_2) \circ (W_3, h_3)\bigg )
\end{align}
    
\end{proof}

\subsection{Sensitivity of recurrent weights}
\label{subsec:sensitivity_of_recurrent_weights}

Let $M$ be the maximum value in given finite precision machine. 
Let $\lambda = \max(\textrm{diag}(\Lambda)) (< 1)$ be the (largest) memory decay mode in state-space models: 
An estimate for the hidden states scale as follow:
\begin{align}
    |h_T|_{\infty}
    & = \left | h_0 + \sum_{k=1}^T \Lambda^k U x_{k-1} \right |_{\infty} \\
    & \leq |h_0|_{\infty} + \frac{1-\lambda^T}{1-\lambda} |U|_1 \sup_k|x_k|_{\infty}
\end{align}
To prevent the overflow of hidden states $|h_T|_2 \leq M$, as the sequence length increases $T \to \infty$, a \textbf{sufficient} condition for the eiganvalue ranges is
\begin{equation}
    \lambda < 1 - \frac{|U|_1 \sup_k |x_k|_{\infty}}{M - |h_0|_{\infty}}. 
\end{equation}
As the learning process of long-term memory requires the slow decay of information within hidden states, achieving long-term memory implies that the parameter $\lambda$ gets close to 1.
This result shows that if we use low-bit quantization (with small $M$), the hidden state might be unbounded by $M$ and therefore the training can be unstable due to overflow issues. 
The overflow issue is \textbf{more severe for large models} as $|U|_1$ scale up in $O(m)$ with respect to the hidden dimension $m$.

\subsection{Existence of weak length extension}
\label{subsec:existence_of_weak_length_extension}

Consider the language modeling as the learning of sequence of random variables. 
It can be seen that such language modeling should obey the weak length extension in the information theory framework. 

\begin{proposition}
    Let the dataset of autoregressive language modeling sampled from the (potentially infinite) sequence of random variables, then we have the existence of weak length extension: 
    \begin{equation}
        H(X_{k+1} | X_1, \dots, X_k) \leq H(X_{k+1} | X_2, \dots, X_k) \leq H(X_{k+1} | X_k) \leq H(X_{k+1}).
    \end{equation}
\end{proposition}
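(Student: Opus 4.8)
The plan is to establish the chain of inequalities
\begin{equation}
    H(X_{k+1} \mid X_1, \dots, X_k) \leq H(X_{k+1} \mid X_2, \dots, X_k) \leq H(X_{k+1} \mid X_k) \leq H(X_{k+1})
\end{equation}
by observing that each inequality is an instance of the same principle: dropping a conditioning variable can only increase (or leave unchanged) the conditional entropy. This is precisely \cref{thm:conditioning_reduced_entropy} (conditioning reduces entropy), which I would apply repeatedly rather than reprove from scratch.

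First I would recall the generalized form of the ``conditioning reduces entropy'' theorem: for any random variables $A, B$ and any conditioning tuple $Z$, we have $H(A \mid Z, B) \leq H(A \mid Z)$, with equality iff $A$ and $B$ are conditionally independent given $Z$. This follows from the basic statement $H(A \mid B) \leq H(A)$ by conditioning every entropy on the extra variables $Z$ throughout, or equivalently from the nonnegativity of conditional mutual information $I(A; B \mid Z) = H(A \mid Z) - H(A \mid Z, B) \geq 0$, which in turn reduces to the information inequality $D(p \| q) \geq 0$ stated earlier. I would then instantiate this principle three times, each time taking $A = X_{k+1}$ and peeling off the leftmost conditioning variable:
\begin{align}
    H(X_{k+1} \mid X_1, X_2, \dots, X_k) & \leq H(X_{k+1} \mid X_2, \dots, X_k), \\
    H(X_{k+1} \mid X_2, \dots, X_k) & \leq H(X_{k+1} \mid X_k), \\
    H(X_{k+1} \mid X_k) & \leq H(X_{k+1}).
\end{align}
The first uses $Z = (X_2, \dots, X_k)$, $B = X_1$; the second collapses the intermediate variables $(X_2, \dots, X_{k-1})$ into the dropped block; the third drops $X_k$ entirely. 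Chaining these yields the claimed inequality.

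I do not anticipate a genuine obstacle here, since every step is a direct application of an already-stated information-theoretic fact; the statement is essentially a corollary of \cref{thm:conditioning_reduced_entropy}. The only point requiring mild care is the \emph{middle} inequality, where several variables are removed at once rather than a single one: to justify $H(X_{k+1} \mid X_2, \dots, X_k) \leq H(X_{k+1} \mid X_k)$ I would either iterate the single-variable version once for each dropped index, or invoke the tuple form of the theorem with $B = (X_2, \dots, X_{k-1})$ so that a whole block is discarded in one stroke. Note that, unlike the shift-invariance argument used in the theorem of the main text, this proposition does \emph{not} require stationarity — it holds for an arbitrary (possibly non-shift-invariant) sequence of random variables, since it only exploits monotonicity of entropy under conditioning. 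The boundedness-of-entropy hypothesis plays no role in the inequality itself; it is only needed downstream to conclude that the decreasing sequence of conditional entropies converges to its infimum.
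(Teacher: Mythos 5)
Your proof is correct and takes essentially the same route as the paper: the paper's proof likewise consists of repeatedly applying \cref{thm:conditioning_reduced_entropy}, peeling off one conditioning variable $X_i$ at a time from $H(X_{k+1} \mid X_{i+1}, \dots, X_k)$. Your version is somewhat more careful in spelling out that the conditional (tuple-conditioned) form of the theorem is what is actually being invoked, but the underlying argument is identical.
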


This is a direct result from the conditioning reduces entropy theorem. 
We just need to repeatedly apply the above \cref{thm:conditioning_reduced_entropy} to $X = (X_{k+1} | X_{i+1}, \dots, X_{k})$ while $Y=X_i$.

\section{Additional numerical results}
\label{sec:additional_numerical_results}

In this section, we provide additional numerical results to show previous-initialized hidden state is also effective for S5 (\cref{subsec:comparison_of_different_hidden_states_initialization}), the empirical similarity between length extension and polynomial extrapolation (\cref{{subsec:overfit_in_length_extension}}) and the benefit of previous-hidden states in training for GRU (\cref{subsec:generalization_to_other_recurrent_model}).

\subsection{Comparison of different hidden states initialization}
\label{subsec:comparison_of_different_hidden_states_initialization}

In \cref{fig:comparison_of_two_hidden_states_initailizations} we show the effect of previous-initialized hidden state help the Mamba to have length extension. 
In \cref{fig:S5_comparison_of_previous_and_zero_over_unshuffle}, we further compare the influence of hidden state initialization schemes during the training of the models. 
It can be seen the S5 trained with previous-initialized hidden states can have length extension from 16 to 32768.

\begin{figure}[ht!]
    {
    \centering
    \subfigure[][Previous-initialized S5 + unshuffled dataloader]{\includegraphics[width=0.47\textwidth]{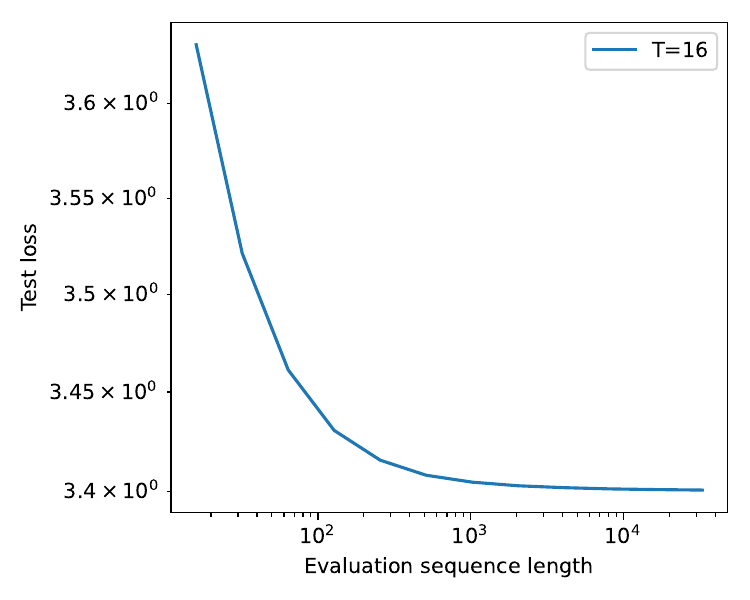}
    }
    \subfigure[][Zero-initialized S5 and unshuffled dataloader]{\includegraphics[width=0.47\textwidth]{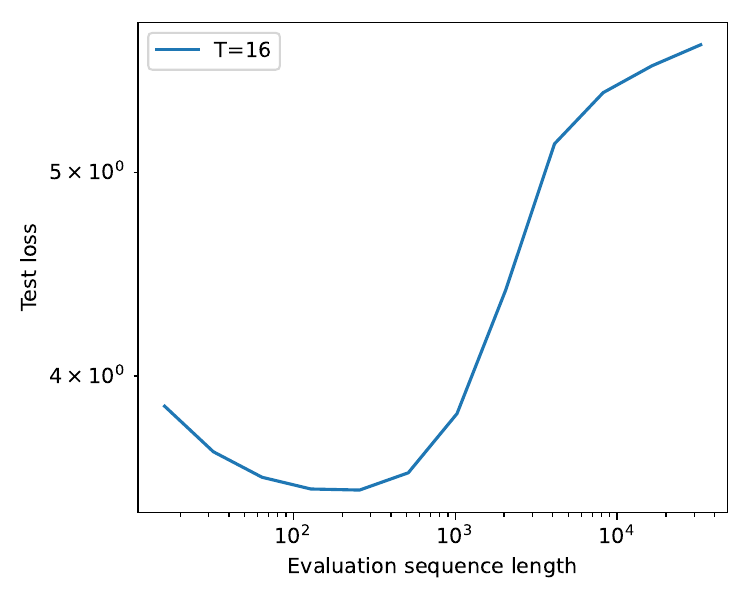}
    }

    \caption{
        The 30M S5 model trained over Wikitext103 using previous-initialized hidden states has (weak) length extension capability up to sequence length 32768 while the zero-initialized model fails to have monotone length extension capability. 
    }
    \label{fig:S5_comparison_of_previous_and_zero_over_unshuffle}
}
\end{figure}

\subsection{Overfitting in length extension}
\label{subsec:overfit_in_length_extension}

We show the length extension for nonlinear state-space models are similar to polynomial extrapolation in the following sense: 
This escalation in parameters significantly complicates the length extension process, necessitating a proportional increase in the training sequence length for zero-initialized models from 64 and 256 to a substantial 1024. 
This correspond to the overfitting argument as the larger model size will require larger sequence length (more data for the evaluation of the memory function $\rho$.)
It is anticipated that, for large models trained with zero-initialized hidden states, one cannot use length $T=2048$ to train a model with length extension capability. 
The missing last row in 370M is due to the out-of-memory issue. 
Two missing values come from overflow issue. 

\begin{figure}[t!]{
    \centering
    \subfigure[][12M]{\includegraphics[width=0.47\textwidth]{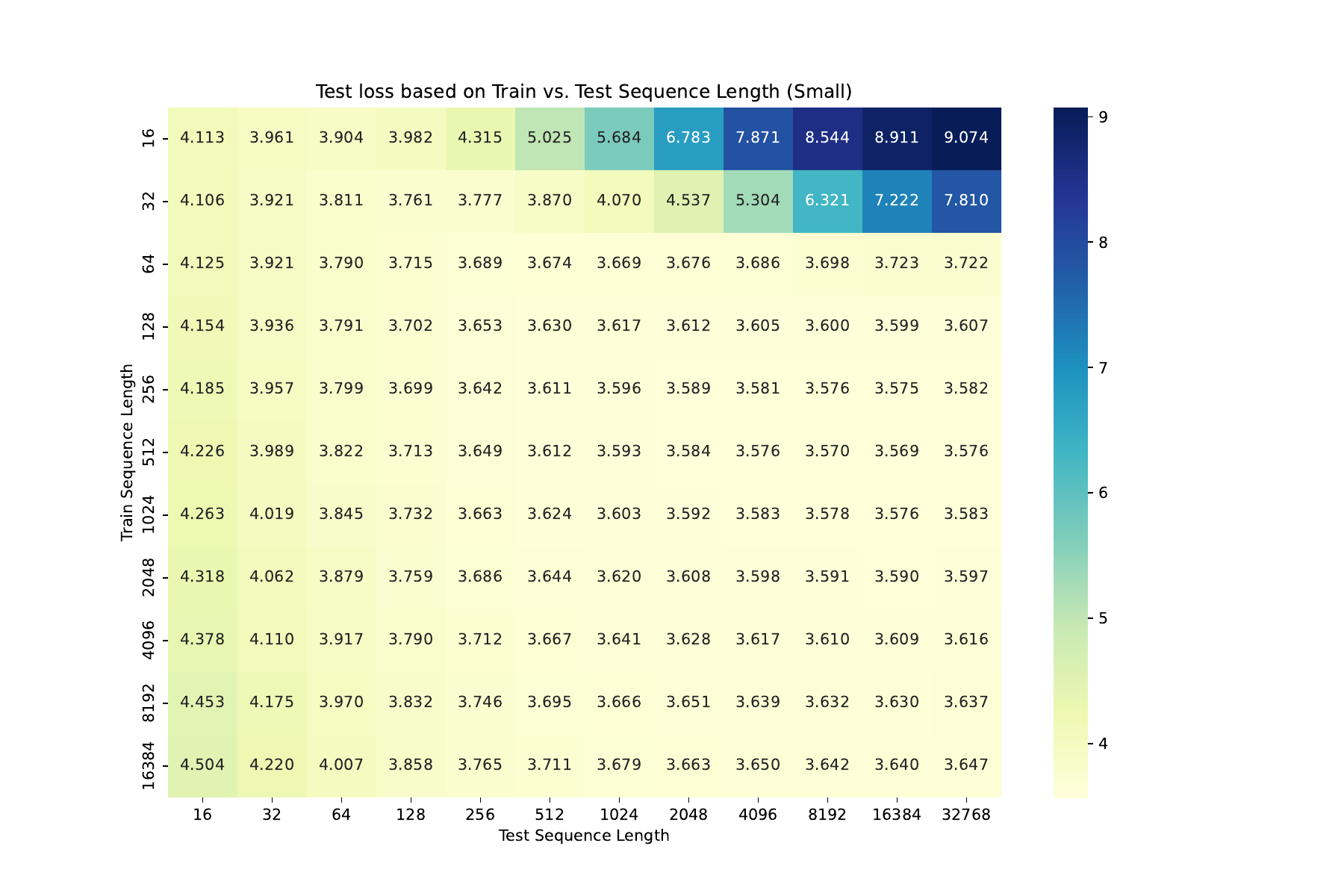}}
    \subfigure[][37M]{\includegraphics[width=0.47\textwidth]{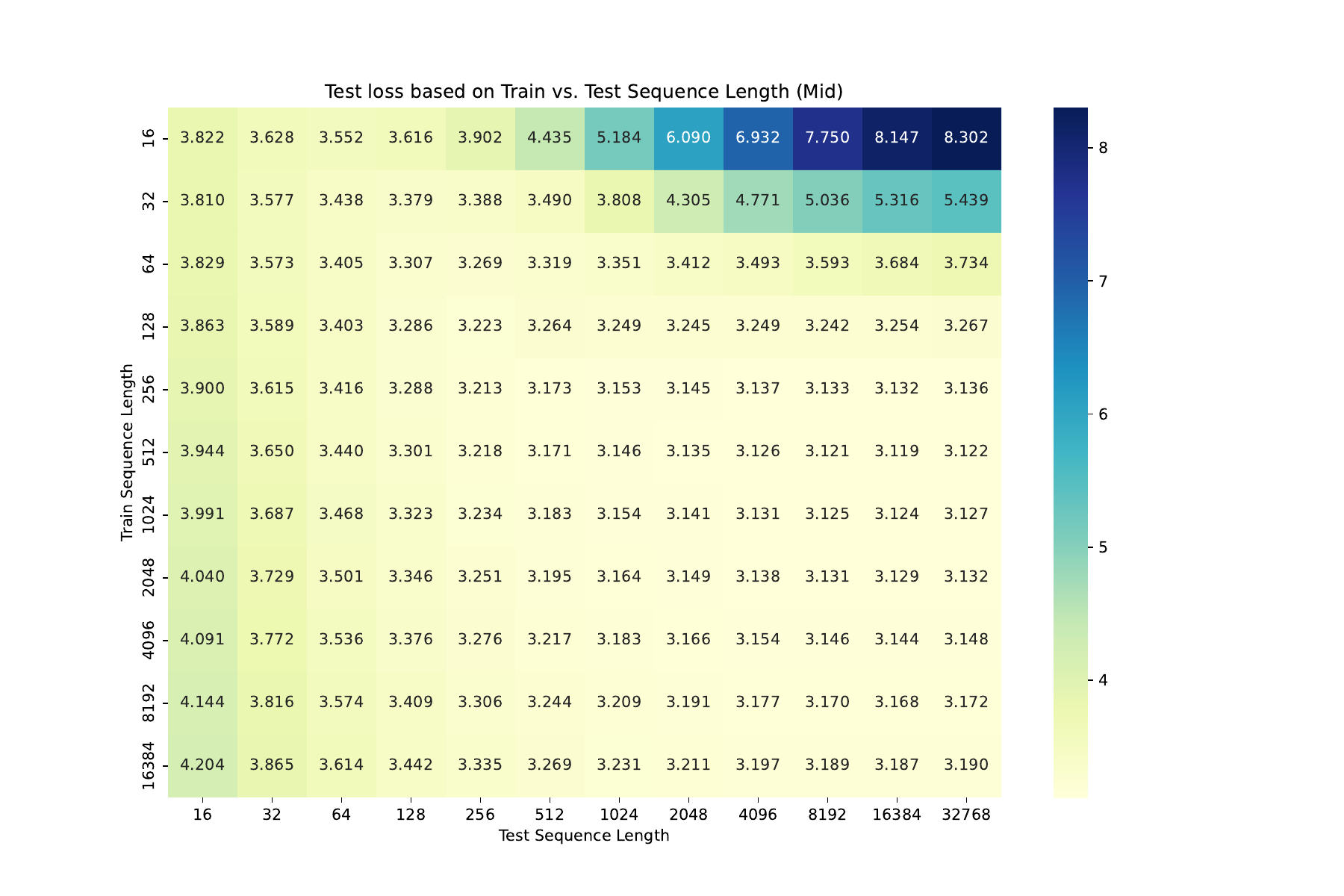}}
    
    \subfigure[][127M]{\includegraphics[width=0.47\textwidth]{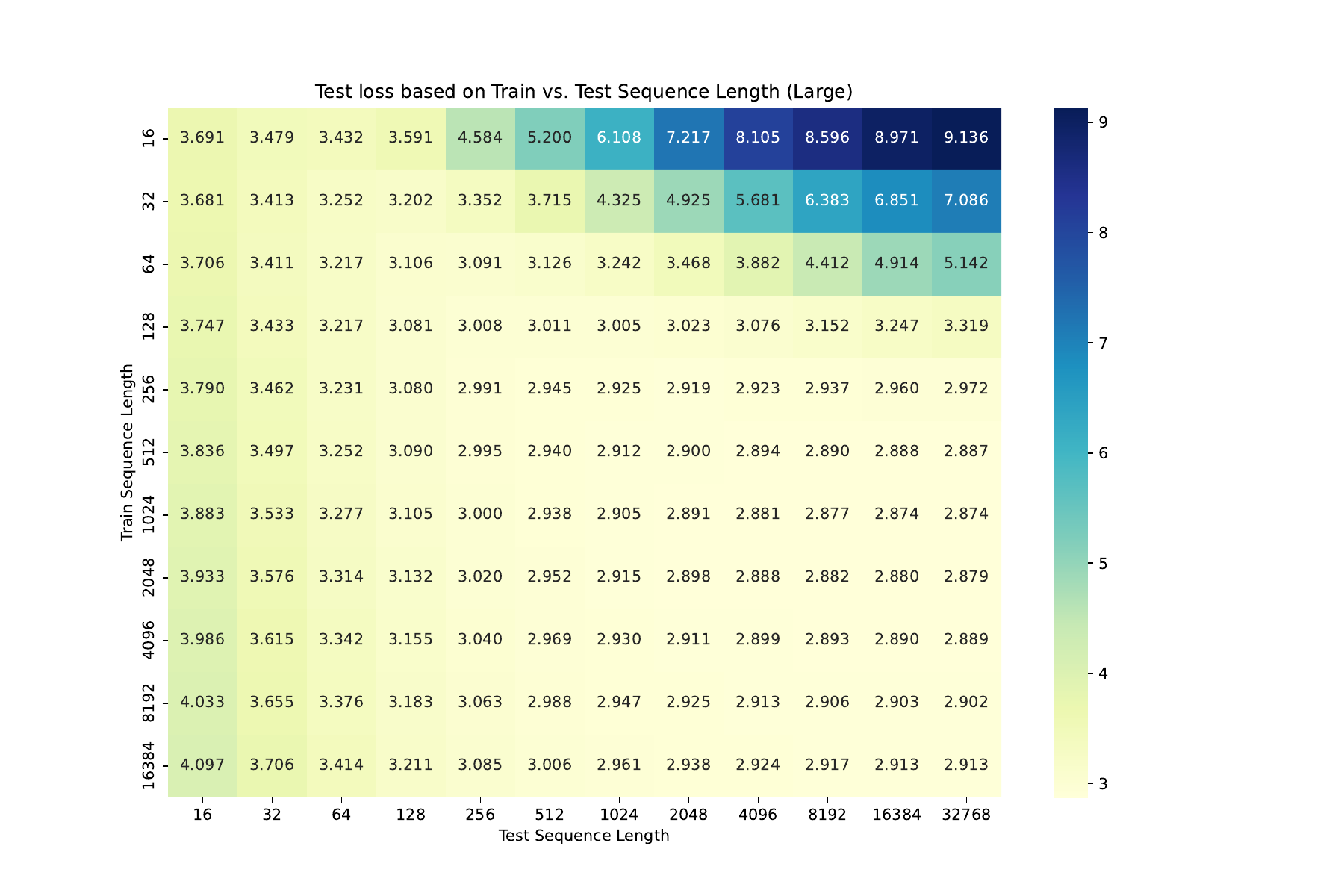}}
    \subfigure[][370M]{\includegraphics[width=0.47\textwidth]{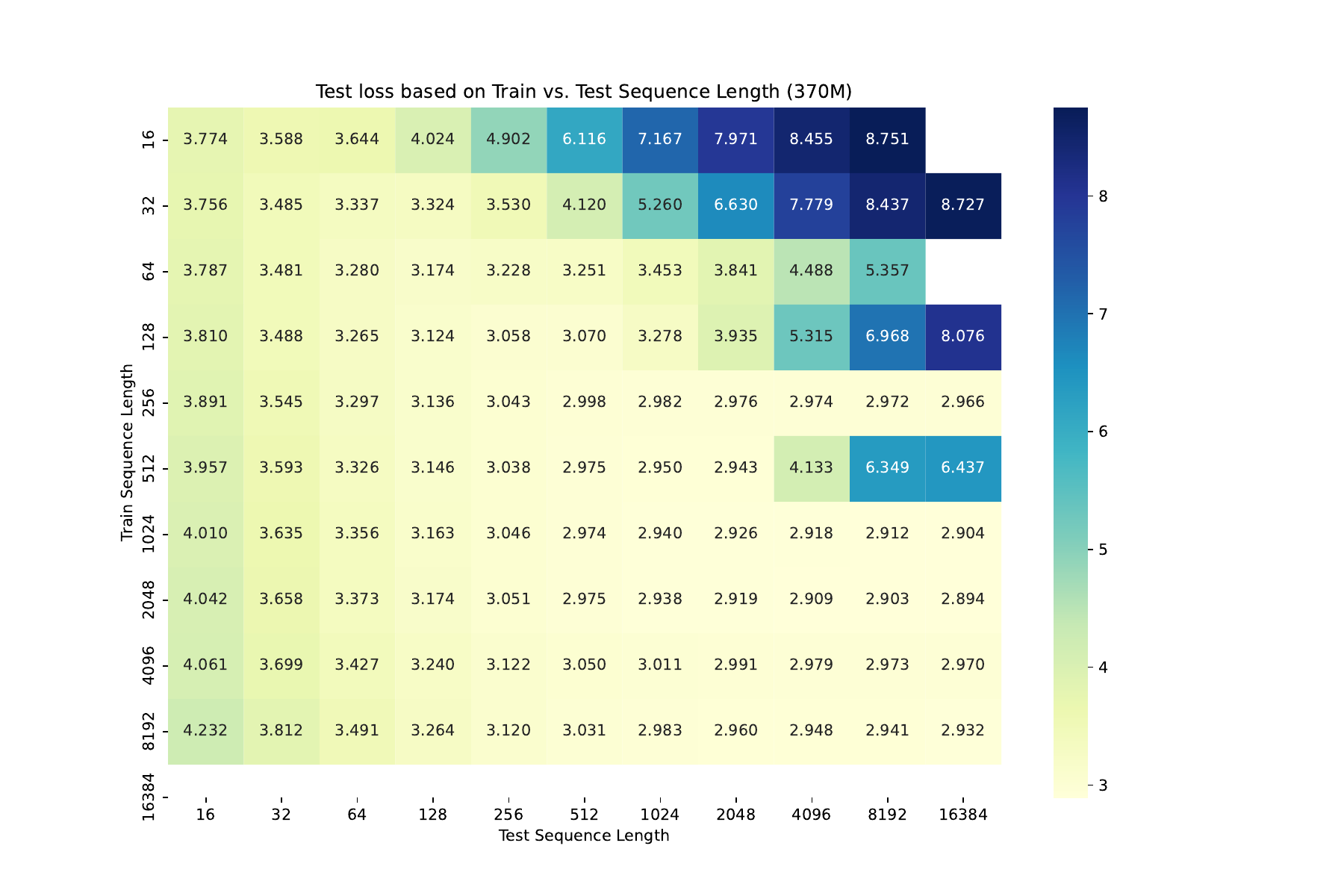}}

    \caption{
        Various sizes of zero-initialized S5 models were trained on Wikitext103. 
        The performance observed in the upper triangular area of the graph illustrates their capacity for length extension. 
        To maintain consistency, we scaled up the models while keeping the training settings constant. 
        Notably, as models are initialized with zero-hidden states, larger models necessitate longer training contexts to effectively handle length extension
    }
    \label{figs:S5_zero_length_extension_larger_worse}
}

\end{figure}

\subsection{Generalization to other recurrent model}
\label{subsec:generalization_to_other_recurrent_model}

In addition to the state-space models, we extend our comparison to different hidden state initialization schemes for a 6-layer GRU with 30 million parameters, as depicted in \cref{fig:GRU_Wikitext_training_benefit}. 
The results demonstrate that initializing with previous hidden states can enhance the training performance of the GRU model.

\begin{figure*}[ht!]{
    \centering
    \subfigure[][$T=16$]{\includegraphics[width=0.47\textwidth]{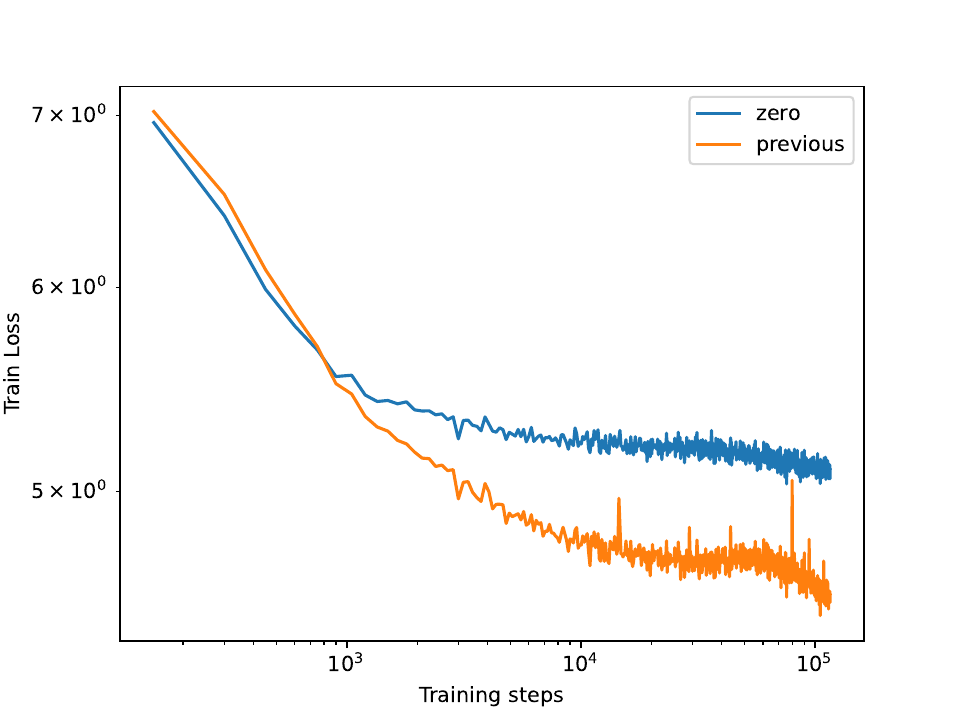}}
    \subfigure[][$T=32$]{\includegraphics[width=0.47\textwidth]{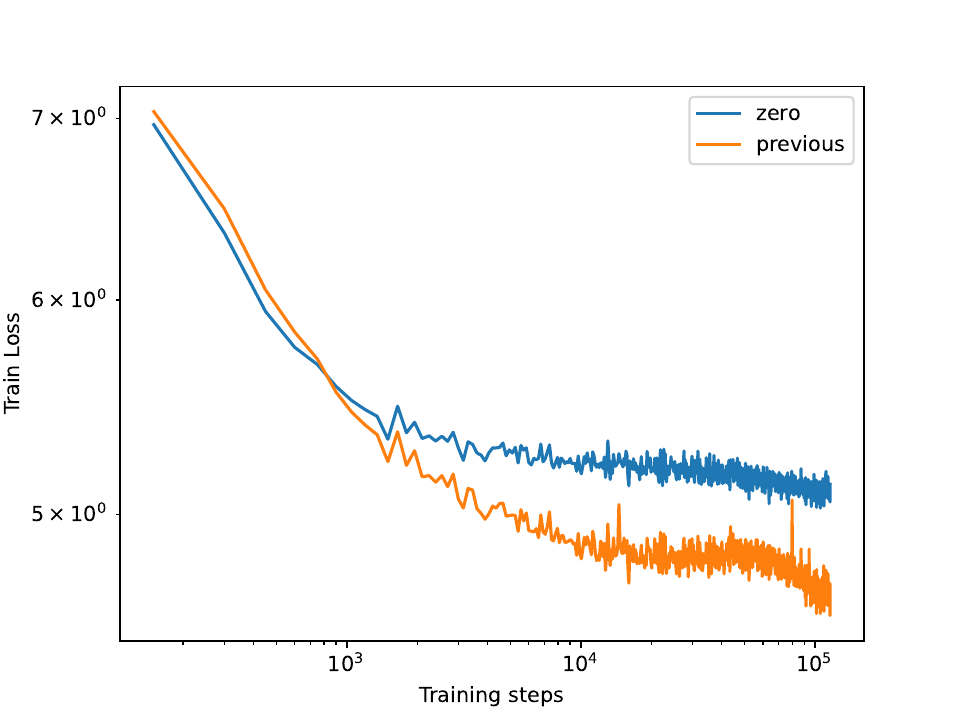}}

    \subfigure[][$T=64$]{\includegraphics[width=0.47\textwidth]{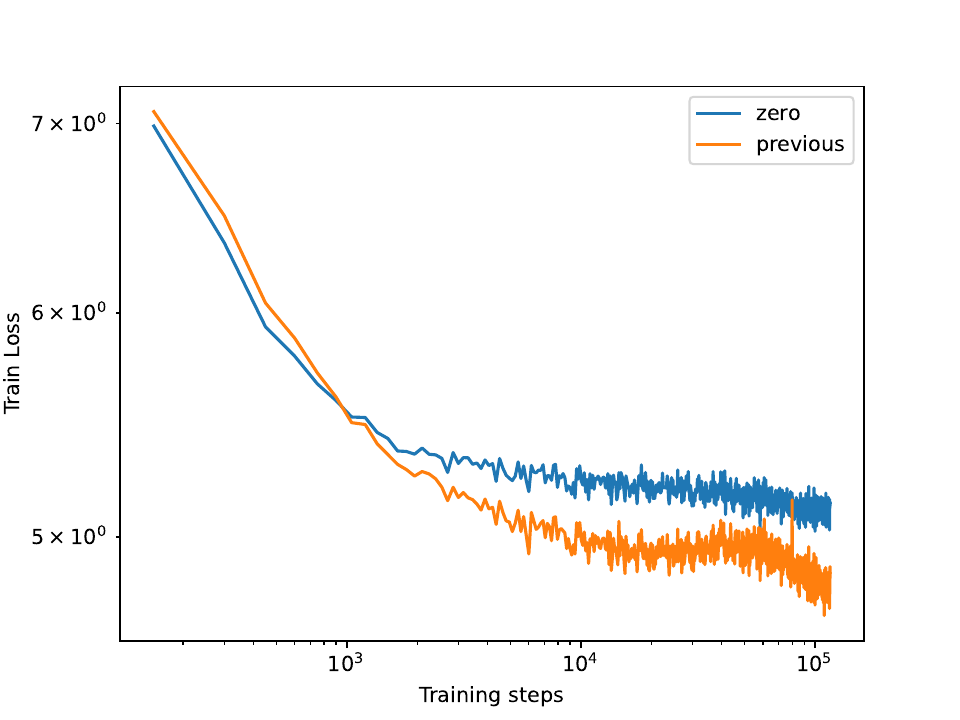}}
    \subfigure[][$T=128$]{\includegraphics[width=0.47\textwidth]{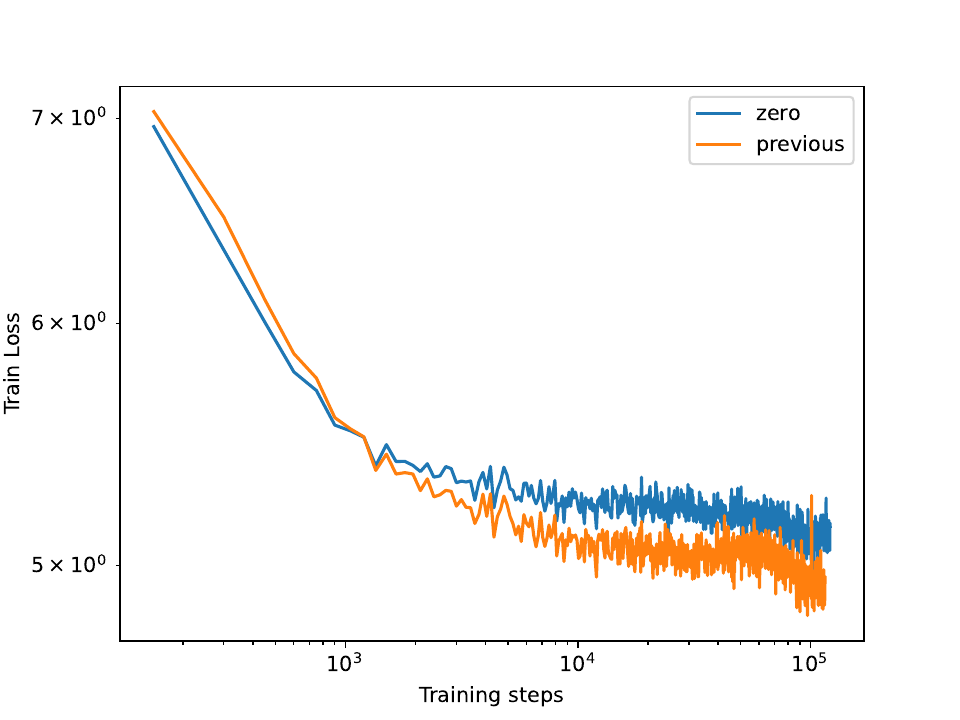}}
    \caption{
        Training with previous hidden states initialization on Wikitext103 enhances GRU training performance. 
        Here $T$ is the training sequence length. 
    }
    \label{fig:GRU_Wikitext_training_benefit}
}
\end{figure*}

\end{document}